\newcommand{\R}{\mathbb{R}}
\newcommand{\sspace}{\mathcal{S}}
\newcommand{\aspace}{\mathcal{A}}
\newcommand{\flowmap}{\Phi}
\newcommand{\CauchyGreen}{C}
\newcommand{\Jacobian}{J}
\newcommand{\FTLE}{\sigma}
\newcommand{\defeq}{\triangleq}
\newcommand{\norm}[1]{\left\lVert#1\right\rVert}
\newcommand{\abs}[1]{\left\lvert#1\right\rvert}
\newcommand{\tr}{^T}
\newcommand{\mbr}{\mathrm{MBR}}
\newcommand{\asas}{\mathrm{ASAS}}
\newcommand{\tasas}{\mathrm{TASAS}}
\newcommand{\ms}{\mathrm{MS}}
\newtheorem{definition}{Definition}[section]
\newtheorem{proposition}{Proposition}[section]
\title{\textbf{A Dynamical Systems Framework for Reinforcement Learning Safety and Robustness Verification}}
\author{Ahmed Nasir,  Abdelhafid Zenati\thanks{A. Nasir and A. Zenati are with Engineering Department, School of Science and Technology (SST), City University of London, Northampton Square,  
		London, EC1V 0HB, United Kingdom.}%
}
\begin{document}

	\maketitle
	
	\begin{abstract}
		The application of reinforcement learning to safety-critical systems is limited by the lack of formal methods for verifying the robustness and safety of learned policies. This paper introduces a novel framework that addresses this gap by analyzing the combination of an RL agent and its environment as a discrete-time autonomous dynamical system. By leveraging tools from dynamical systems theory, specifically the Finite-Time Lyapunov Exponent (FTLE), we identify and visualize Lagrangian Coherent Structures (LCS) that act as the hidden "skeleton" governing the system's behavior. We demonstrate that repelling LCS function as safety barriers around unsafe regions, while attracting LCS reveal the system's convergence properties and potential failure modes, such as unintended "trap" states. To move beyond qualitative visualization, we introduce a suite of quantitative metrics, Mean Boundary Repulsion (MBR), Aggregated Spurious Attractor Strength (ASAS), and Temporally-Aware Spurious Attractor Strength (TASAS), to formally measure a policy's safety margin and robustness. We further provide a method for deriving local stability guarantees and extend the analysis to handle model uncertainty. Through experiments in both discrete and continuous control environments, we show that this framework provides a comprehensive and interpretable assessment of policy behavior, successfully identifying critical flaws in policies that appear successful based on reward alone.
	\end{abstract}
	
	\begin{IEEEkeywords}
		Reinforcement Learning, Dynamical Systems, Safety Verification, Robustness, Lagrangian Coherent Structures, FTLE.
	\end{IEEEkeywords}

	\section{Introduction}
	
	Artificial Intelligence (AI) aims to create machines capable of reasoning, learning, and acting autonomously, mirroring the faculties of human intelligence \cite{norvig1994artificial}. A driving force behind modern AI is Machine Learning (ML), a field dedicated to developing algorithms that enable systems to learn from data and experience rather than being explicitly programmed for a task \cite{shinde2018review}. Within ML, the advent of Deep Learning, which utilizes deep neural networks with many layers, has triggered a revolution, achieving unprecedented performance in perception-heavy domains like computer vision and natural language processing \cite{lecun2015deep, alzubaidi2021review}. Parallel to this, Reinforcement Learning (RL) has emerged as a powerful framework for tackling sequential decision-making problems, where an agent learns to achieve a goal in an uncertain environment through a process of trial and error \cite{sutton1998reinforcement, kaelbling1996reinforcement}.
	
	The fusion of these fields has given rise to Deep Reinforcement Learning (DRL), a paradigm that leverages the powerful representation learning of deep networks to solve complex RL problems with high-dimensional state spaces \cite{arulkumaran2017deep, prudencio2023survey}. DRL has produced remarkable successes, achieving superhuman performance in complex strategic games like Go, Chess, and StarCraft \cite{silver2016mastering, silver2018general, vinyals2019grandmaster}, and demonstrating significant potential in fields as diverse as robotics \cite{kober2013reinforcement, openai2019solving}, autonomous driving \cite{kiran2021deep, shalev2016safe}, and fluid dynamics \cite{krishna2023finite, zolman2024sindyrl}. Foundational DRL algorithms, including Deep Q-Networks (DQN) \cite{mnih2013playing}, Proximal Policy Optimization (PPO) \cite{schulman2017proximal}, and Soft Actor-Critic (SAC) \cite{haarnoja2018soft}, have become standard tools for training agents that can learn sophisticated policies directly from raw sensory inputs.
	
	However, despite these achievements in simulated or controlled environments, the transition of DRL agents to safety-critical, real-world systems remains a formidable challenge \cite{doshi2017towards, pullum2020review, prudencio2023survey, kiran2021deep}. Unlike the forgiving nature of a video game, applications in autonomous navigation or medical robotics demand stringent guarantees of safety and robustness. Policies learned by DRL agents often function as "black boxes," making their decision-making processes opaque and difficult to trust \cite{attar2019reinforcement, zolman2024sindyrl}. This lack of interpretability is compounded by a critical vulnerability: the susceptibility of deep neural networks to adversarial perturbations. It has been repeatedly demonstrated that minuscule, often imperceptible, changes to an agent's sensory input can cause catastrophic failures in its policy, leading to unsafe and unpredictable behavior \cite{zhang2020robust, schott2024robust, fischer2019online, zhang2021robust, dabholkar2023adversarial, fischer2019online}. This fragility poses a significant barrier to deployment, as real-world sensors are inherently noisy and environments can be unpredictable or even adversarial \cite{zhang2021robust, young2024enhancing}.
	
	The DRL community has actively pursued solutions to this robustness problem. Strategies range from safe RL, which often uses constrained optimization or Lyapunov-based methods to ensure stability \cite{chow2018lyapunov, achiam2017constrained}, to robust RL, which formulates the problem as a zero-sum game against an adversary to find policies that are resilient to worst-case disturbances \cite{pinto2017robust, li2023safe}. Formal verification techniques have also been explored, using methods like abstract interpretation to provide probabilistic guarantees on policy safety \cite{bacci2022verified, wu2024verified}. While these approaches have made significant strides, they often rely on statistical assumptions, are computationally intensive, or do not provide a clear, intuitive picture of the policy's global behavior and potential failure modes \cite{prudencio2023survey, schott2024robust, wu2024verified}. A critical gap remains for frameworks that can offer deterministic, interpretable, and formal guarantees of a policy's safety and robustness.
	
	This paper addresses this gap by introducing a novel verification framework rooted in the principles of dynamical systems theory. Our central premise is that the combination of a deterministic RL policy and its environment constitutes a discrete-time autonomous dynamical system. The behavior of any agent within this system, how it navigates, where it succeeds, and where it might fail, can be understood by studying the geometry of the flow it induces on the state space. This perspective allows us to move beyond simple performance metrics, like cumulative reward, and probe the underlying structures that govern trajectory evolution.
	
	To achieve this, we employ a key tool from the analysis of complex flows: the \textbf{Finite-Time Lyapunov Exponent (FTLE)}. The FTLE is a scalar field that measures the maximal rate at which initially close trajectories separate over a finite time window. Regions with high FTLE values form sharp ridges known as \textbf{Lagrangian Coherent Structures (LCS)}, which function as the hidden "skeleton" of the dynamics \cite{haller2015lagrangian}. These structures act as the most influential material surfaces, partitioning the flow into distinct, coherent regions and fundamentally governing the transport and mixing properties of the system \cite{shadden2005definition, jones2024mode}.
	
	\begin{figure}[!t]
		\centering
		\includegraphics[width=\columnwidth]{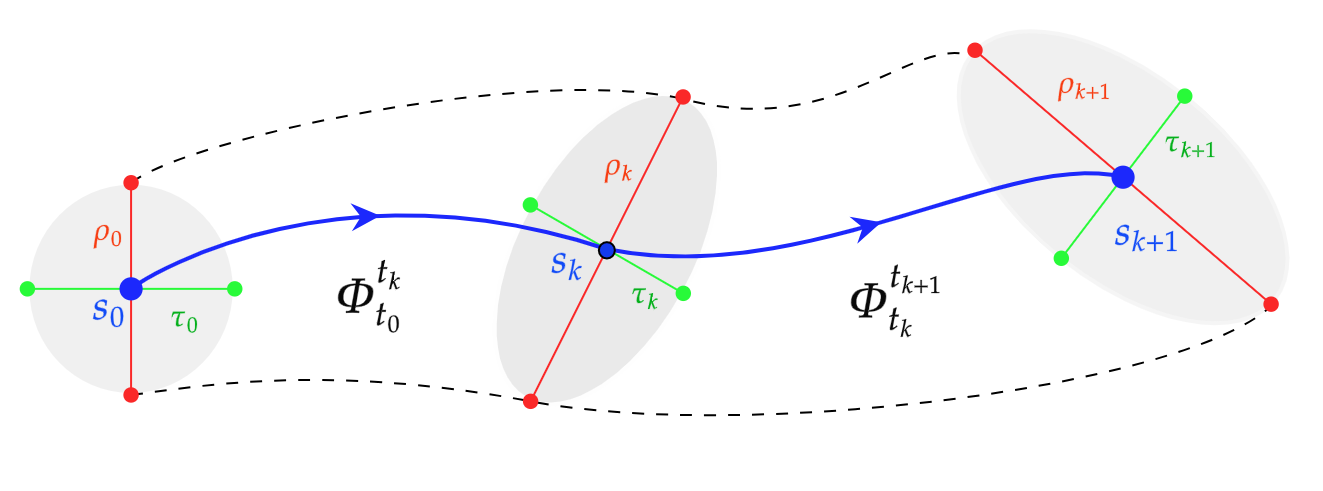}
		\caption{Illustration of trajectory separation, the fundamental principle quantified by the Finite-Time Lyapunov Exponent (FTLE). Initially nearby particles (left) follow distinct paths over time. The maximal rate of their separation (right) reveals the underlying stability and structure of the dynamical flow, forming the basis for identifying Lagrangian Coherent Structures.}
		\label{fig:particle_movement}
	\end{figure}
	
	The power of this framework lies in the direct and intuitive mapping of these dynamical structures to critical properties of an RL agent's policy:
	\begin{itemize}
		\item \textbf{Safety Verification via Repelling LCS:} By computing the FTLE field for the forward-time flow, we identify repelling LCS. These structures function as dynamical barriers or "divides" that nearby trajectories cannot cross. For a safe policy, we hypothesize that the agent must learn to create strong repelling LCS that effectively "wall off" obstacles and unsafe regions of the state space. Visualizing these barriers provides a powerful qualitative assessment of the policy's safety mechanisms.
		\item \textbf{Robustness Verification via Attracting LCS:} Conversely, attracting LCS act as dynamical "highways" and attractors. For a robust policy, we expect to find a single, dominant basin of attraction centered on the desired goal state. The existence of other strong attractors reveals potential robustness failures, such as unintended "trap" states or suboptimal cyclic behaviors where the agent could become permanently stuck.
	\end{itemize}
	
	Building on this qualitative foundation, this paper introduces a suite of quantitative tools to formalize the verification process. Our primary contributions are:
	\begin{enumerate}
		\item A formal framework for analyzing a deterministic RL policy and its environment as a discrete-time dynamical system.
		\item The application of FTLE and LCS to qualitatively visualize the safety barriers and state-space attractors that define a learned policy's behavior.
		\item The development of novel scalar metrics, Mean Boundary Repulsion (MBR), Aggregated Spurious Attractor Strength (ASAS), and Temporally-Aware Spurious Attractor Strength (TASAS), to quantify the policy's safety margin and its robustness of convergence to the goal.
		\item A formal, local stability guarantee derived from the FTLE field, which provides an $(\epsilon, \delta)$-style certificate of robustness against small state perturbations.
	\end{enumerate}
	
	This paper is structured to guide the reader from foundational concepts to advanced applications. We begin by formally establishing the dynamical systems preliminaries, then introduce our quantitative metrics, present our formal stability results, and conclude with advanced robustness analysis and experimental validation across multiple environments.
	
	\section{Reinforcement Learning through the Lens of Dynamical Systems}
	
	We begin by conceptualizing the interaction between a trained reinforcement learning (RL) agent and its environment as a dynamical system. This perspective allows us to analyze the global behavior and stability of the learned policy. Let the agent's state space be a subset of an n-dimensional space, $\sspace \subseteq \R^n$, and let its action space be $\aspace$. A trained, deterministic RL policy can be modeled as a map $\pi: \sspace \to \aspace$, which prescribes an action for every possible state. The environment's dynamics are described by a transition function $T: \sspace \times \aspace \to \sspace$, which dictates the next state given the current state and action.
	
	\subsection{The Closed-Loop System and the Flow Map}
	When the policy $\pi$ is deployed, it creates a closed-loop system with the environment. In this system, the state at the next time step, $s_{k+1}$, is determined solely by the current state $s_k$. This defines an autonomous, discrete-time dynamical system governed by the function $f$:
	\begin{equation}
		s_{k+1} = f(s_k) \defeq T(s_k, \pi(s_k)).
	\end{equation}
	To understand the system's behavior over longer periods, we introduce the \textbf{flow map}, $\flowmap$. This map describes the evolution of the system over a fixed time horizon of $T_{\text{int}}$ steps, advancing an initial state $s_0$ to its final state:
	\begin{equation}
		\flowmap(s_0) = f^{T_{\text{int}}}(s_0) \defeq \underbrace{f \circ f \circ \dots \circ f}_{T_{\text{int}} \text{ times}}(s_0).
	\end{equation}
	
	\subsection{Quantifying Trajectory Separation with the FTLE}
	A key question in dynamical systems is how nearby trajectories separate over time. This separation reveals the stability and structure of the flow. The \textbf{Finite-Time Lyapunov Exponent (FTLE)} is a scalar field that quantifies the maximum rate of separation of infinitesimally close trajectories over a finite interval.
	
	To compute the FTLE, we must first characterize how an infinitesimal neighborhood around a point $s_0$ is deformed by the flow. This local deformation is captured by the \textbf{deformation gradient tensor}, which is the Jacobian of the flow map:
	\begin{equation}
		\Jacobian(s_0) = \frac{\partial \flowmap(s_0)}{\partial s_0}.
	\end{equation}
	The Jacobian $\Jacobian(s_0)$ describes both the stretching and rotation of the neighborhood. To isolate the stretching, which governs trajectory separation, we use the \textbf{right Cauchy-Green deformation tensor}, $\CauchyGreen$:
	\begin{equation}
		\CauchyGreen(s_0) = \Jacobian(s_0)\tr \Jacobian(s_0).
	\end{equation}
	This symmetric, positive-semidefinite matrix removes the rotational component. Its largest eigenvalue, $\lambda_{\max}(\CauchyGreen(s_0)$, represents the maximum squared stretching initiated at $s_0$.
	
	\begin{definition}[Finite-Time Lyapunov Exponent]
		The FTLE field, $\FTLE(s_0, T_{\text{int}})$, is defined as the average exponential rate of this maximal stretching:
		\begin{equation}
			\FTLE(s_0, T_{\text{int}}) = \frac{1}{\abs{T_{\text{int}}}} \ln \sqrt{\lambda_{\max}(\CauchyGreen(s_0))}.
			\label{eq:ftle}
		\end{equation}
	\end{definition}
	The FTLE field reveals the hidden structure of the dynamics. Ridges of high FTLE values, which are locally maximizing curves, are identified as \textbf{Lagrangian Coherent Structures (LCS)}. These structures act as the organizing skeleton of the flow, forming barriers that partition the state space into dynamically distinct regions.
	
	\subsection{Numerical FTLE Calculation in a Discrete State Space}
	In a practical RL setting with a grid-based state space, the flow map $\flowmap$ is not an analytical function, so its Jacobian cannot be computed directly. We therefore approximate the Jacobian using a \textbf{finite difference scheme}.
	
	For each point $s_0$ on the grid, we consider its immediate neighbors. Specifically, for a small step size $h$ (typically the grid spacing), we define perturbed initial points along each coordinate axis $i$:
	$s_{0,i} = s_0 + h \mathbf{e}_i$, where $\mathbf{e}_i$ is the unit vector in the $i$-th dimension.
	
	We advect both the central point $s_0$ and each perturbed point $s_{0,i}$ forward in time by $T_{\text{int}}$ using the learned flow map $\flowmap$. The final perturbation vectors, which describe how the initial orthogonal perturbations have been stretched and rotated by the flow, are:
	\begin{equation}
		\mathbf{u}_i = \flowmap(s_{0,i}) - \flowmap(s_0).
	\end{equation}
	These vectors allow us to construct an approximation of the deformation gradient tensor $\Jacobian$. Each column of the approximate Jacobian is the transformed perturbation vector, normalized by the initial step size: $\Jacobian \approx \frac{1}{h}[\mathbf{u}_1 | \mathbf{u}_2 | \dots | \mathbf{u}_n]$. From this matrix, we compute the Cauchy-Green tensor $\CauchyGreen$ and the FTLE value for $s_0$ using Eq. \eqref{eq:ftle}.
	
	\subsection{Interpreting LCS for Policy Analysis}
	The LCS framework provides a powerful lens for analyzing the safety and robustness of an RL policy by revealing the structure of the learned dynamics. We consider two types of LCS.
	
	\subsubsection{Forward-Time FTLE (Repelling LCS)}
	By computing the FTLE for the forward-time flow map as described above, we identify \textbf{repelling LCS}. These structures act as dynamical divides or barriers that trajectories tend to move away from. For safety verification, we expect high-FTLE ridges to form protective barriers around obstacles and unsafe regions. Their presence verifies that the agent has learned an effective avoidance policy, treating these areas as "keep-out" zones.
	
	\subsubsection{Backward-Time FTLE (Attracting LCS)}
	Conversely, \textbf{attracting LCS} function as dynamical "highways" or collectors that attract and guide nearby trajectories. Direct computation of these structures would require inverting the flow map, which is generally infeasible for a learned RL policy. Instead, we can approximate the locations of the system's attractors by simulating a large number of trajectories from random initial positions and creating a 2D histogram of their final states after time $T_{\text{int}}$. High-density regions in this histogram correspond to attractors. For robustness, we expect to find a single, dominant attractor at the goal state. The presence of other strong attractors could signify policy vulnerabilities, such as a tendency to get stuck in local optima (trap states) or a failure to converge reliably to the desired goal.

\section{Local Stability Guarantees}
In regions of low FTLE, we can provide a formal guarantee that initially close trajectories remain close, certifying local robustness.

\begin{proposition}[Bounded Divergence]
Let $R$ be a convex region in the state space, and let $\sigma_{\max} = \sup_{s \in R} \sigma(s, T_{\text{int}})$. For any two initial states $s_a, s_b \in R$, their separation after time $T_{\text{int}}$ is bounded by:
\begin{equation}
    \norm{ \flowmap(s_b) - \flowmap(s_a) } \le e^{(\sigma_{\max} \cdot T_{\text{int}})} \cdot \norm{ s_b - s_a }.
\end{equation}
\end{proposition}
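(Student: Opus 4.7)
The plan is to reduce the claim to a standard mean value inequality for $C^1$ maps, with the FTLE supplying the operator-norm bound on the Jacobian of the flow map. The first step is to rewrite the FTLE definition in Equation \eqref{eq:ftle} as an identity about the induced $\ell_2$ operator norm of $\Jacobian(s)$. Since $\CauchyGreen(s) = \Jacobian(s)\tr \Jacobian(s)$, the largest eigenvalue of $\CauchyGreen(s)$ equals the square of the largest singular value of $\Jacobian(s)$, so $\norm{\Jacobian(s)}_2 = \sqrt{\lambda_{\max}(\CauchyGreen(s))} = e^{\FTLE(s, T_{\text{int}}) \cdot T_{\text{int}}}$. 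Taking the supremum over $s \in R$ immediately yields the uniform bound $\norm{\Jacobian(s)}_2 \le e^{\sigma_{\max} \cdot T_{\text{int}}}$ for every $s \in R$.

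The second step is to exploit convexity to apply the fundamental theorem of calculus along the straight line between the two initial states. Define $\gamma(t) = s_a + t(s_b - s_a)$ for $t \in [0,1]$; convexity of $R$ guarantees $\gamma(t) \in R$, so the Jacobian bound applies along the whole path. Assuming that the closed-loop map $f$, and hence $\flowmap$, is at least $C^1$ on $R$, the chain rule gives
\begin{equation}
    \flowmap(s_b) - \flowmap(s_a) = \int_0^1 \Jacobian(\gamma(t)) \, (s_b - s_a) \, dt.
\end{equation}
Taking norms, pulling the constant vector $s_b - s_a$ out, and using the operator-norm bound from the first step yields
\begin{equation}
    \norm{\flowmap(s_b) - \flowmap(s_a)} \le \left(\int_0^1 \norm{\Jacobian(\gamma(t))}_2 \, dt\right) \norm{s_b - s_a} \le e^{\sigma_{\max} \cdot T_{\text{int}}} \norm{s_b - s_a},
\end{equation}
which is exactly the claim.

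The main obstacle is the regularity assumption: the closed-loop map $f = T(\cdot, \pi(\cdot))$ need not be globally differentiable, since typical deep policy networks use piecewise-linear activations such as ReLU. I would address this by either (i) stating a smoothness hypothesis on $\pi$ and $T$ on $R$ explicitly in the proposition, or (ii) replacing the mean value integral with a Lipschitz argument: the uniform singular-value bound still implies $\flowmap$ is locally Lipschitz with constant $e^{\sigma_{\max} \cdot T_{\text{int}}}$ on the open set where it is differentiable, and by Rademacher's theorem (or by approximating $\pi$ with a smoothed version) this Lipschitz constant extends to all of the convex set $R$. A secondary subtlety worth flagging is that in practice $\sigma_{\max}$ is estimated from the finite-difference FTLE on a grid, so a rigorous certificate would require either a grid-refinement argument or an a posteriori bound on the discretization error; this is a deployment caveat rather than a gap in the mathematical argument itself.
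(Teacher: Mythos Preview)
Your proof is correct and follows essentially the same approach as the paper: both bound $\norm{\Jacobian(s)}_2$ by $e^{\sigma_{\max} T_{\text{int}}}$ via the FTLE definition and then apply a mean value inequality along the segment contained in the convex region $R$. The only cosmetic difference is that the paper invokes the mean value inequality in its ``there exists $c$ on the segment'' form while you derive it through the integral form of the fundamental theorem of calculus; your added remarks on the $C^1$ hypothesis and the finite-difference discretization caveat are sensible refinements that the paper leaves implicit.
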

\begin{proof}
By the Mean Value Theorem for vector-valued functions, for any $s_a, s_b \in R$, there exists a point $c$ on the line segment between them such that $\norm{\flowmap(s_b) - \flowmap(s_a)} \le \norm{\Jacobian(c)}_2 \cdot \norm{s_b - s_a}$, where $\norm{\Jacobian(c)}_2$ is the induced 2-norm of the Jacobian. The norm is defined as the largest singular value of $\Jacobian(c)$, which is $\sqrt{\lambda_{\max}(\Jacobian(c)\tr \Jacobian(c))} = \sqrt{\lambda_{\max}(\CauchyGreen(c))}$. From the definition of FTLE in Eq. \eqref{eq:ftle}, we have $\sqrt{\lambda_{\max}(\CauchyGreen(c))} = e^{(\sigma(c, T_{\text{int}}) \cdot T_{\text{int}})}$. Combining these facts yields:
\begin{equation*}
    \norm{ \flowmap(s_b) - \flowmap(s_a) } \le e^{(\sigma(c, T_{\text{int}}) \cdot T_{\text{int}})} \cdot \norm{ s_b - s_a }.
\end{equation*}
Since $c \in R$, $\sigma(c, T_{\text{int}}) \le \sigma_{\max}$. The final bound follows directly.
\end{proof}

This proposition allows for a formal $(\delta, \epsilon)$ certificate of robustness. For a desired final separation tolerance $\epsilon$, we can guarantee that any initial perturbation smaller than $\delta$ will be tolerated, where $\delta < \epsilon \cdot \exp(-\sigma_{\max} \cdot T_{\text{int}})$.

	\section{Quantitative Metrics and Algorithms}
	While visualizing Lagrangian Coherent Structures provides a rich, qualitative understanding of policy behavior, a rigorous comparison requires objective, scalar metrics. We introduce three novel metrics derived from the FTLE and trajectory analysis.
	
	\subsection{Metric 1: Mean Boundary Repulsion (MBR)}
	\textbf{Purpose:} To measure the policy's commitment to avoiding danger. This metric directly quantifies the strength of the repelling barriers the agent erects around known unsafe regions. A high score indicates a wide margin of safety.
	
	\subsubsection{Formulation}
	Let $\sspace$ be the total state space and $O \subset \sspace$ be the set of all unsafe states. We define the \textbf{obstacle boundary}, $\partial O$, as the set of all safe states immediately adjacent to an unsafe state:
	\begin{equation}
		\partial O = \{ s \in \sspace \setminus O \mid \exists s' \in O \text{ such that } \norm{s - s'}_1 = 1 \}.
	\end{equation}
	The Mean Boundary Repulsion ($\mbr$) is the arithmetic mean of the forward-time FTLE values on this boundary:
	\begin{equation}
		\mbr = \frac{1}{|\partial O|} \sum_{s_0 \in \partial O} \FTLE(s_0, T_{\text{int}}).
	\end{equation}
	A high $\mbr$ indicates strong, robust avoidance; a low $\mbr$ suggests the policy is vulnerable to perturbations near obstacles.
	
	\subsection{Metric 2: Aggregated Spurious Attractor Strength (ASAS)}
	\textbf{Purpose:} To measure the policy's focus on the goal. This metric quantifies the pull of unintended attractors relative to the pull of the true goal. A low score indicates a robust policy.
	
	\subsubsection{Formulation}
	Let $h(s)$ be the final state density map (histogram) from trajectory simulations and $G \subset \sspace$ be the goal region.
	\begin{enumerate}
		\item \textbf{Benchmark Goal Attraction:} $h_{\text{goal}} = \max_{s \in G} h(s)$.
		\item \textbf{Identify Spurious Peaks:} Find all local peaks in $h(s)$ outside the goal region, $\mathcal{P}_{\text{spurious}}$.
		\item \textbf{Filter for Significance:} Create a set of significant peaks, $\mathcal{P}_{\text{sig}} = \{ s \in \mathcal{P}_{\text{spurious}} \mid h(s) \geq \alpha \cdot h_{\text{goal}} \}$, using a threshold $\alpha \in (0, 1]$.
		\item \textbf{Aggregate Spurious Pull:} Sum the densities of all significant peaks: $h_{\text{agg\_spurious}} = \sum_{s \in \mathcal{P}_{\text{sig}}} h(s)$.
		\item \textbf{Compute ASAS Ratio:} $\asas = h_{\text{agg\_spurious}} / h_{\text{goal}}$. If $h_{\text{goal}}=0$, $\asas \defeq \infty$.
	\end{enumerate}
	An $\asas \approx 0$ indicates ideal robustness, while $\asas \geq 1$ signals a critical lack of robustness where spurious attractors dominate the goal.
	
	\subsection{Metric 3: Temporally-Aware Spurious Attractor Strength (TASAS)}
	\textbf{Purpose:} To refine ASAS by distinguishing between transient "highways" and terminal "traps."
	
	\subsubsection{Formulation}
	TASAS weights each spurious attractor's strength by a \textbf{persistence factor}, $\rho(p)$. For each significant peak $p \in \mathcal{P}_{\text{sig}}$, we run new simulations starting from $p$ and measure the fraction that reach the goal, known as the escape ratio $E(p)$. The persistence factor is $\rho(p) = 1 - E(p)$.
	
	The persistence-weighted spurious strength is:
	\begin{equation}
		h_{\text{tasas}} = \sum_{p \in \mathcal{P}_{\text{sig}}} h(p) \cdot \rho(p).
	\end{equation}
	The final TASAS ratio is:
	\begin{equation}
		\tasas = \frac{h_{\text{tasas}}}{h_{\text{goal}}}.
	\end{equation}
	A non-zero $\tasas$ confirms the existence of at least one genuine terminal trap region, providing a more accurate assessment of policy failure than ASAS alone.
	
	\subsection{Algorithm Implementations and Complexity}
	Here we present the pseudocode for our methods and analyze their computational complexity. $|S|$ is the number of valid states in the grid.
	
	\subsubsection{FTLE Calculation}
	\paragraph{Complexity Analysis} The dominant step is the flow map calculation. For each of the $|S|$ states, the algorithm simulates the policy forward for $T_{\text{int}}$ steps. The total time complexity is $\boldsymbol{O(|S| \cdot T_{\textbf{int}})}$.
	
	\begin{algorithm}[!t]
		\caption{FTLE Calculation for a Grid-Based System}
		\label{alg:ftle_calculation_condensed}
		\begin{algorithmic}[1]
			\Procedure{CalculateFTLE}{Grid $G$, Obstacles $O$, Policy $\pi$, Horizon $T_{int}$}
			\State \textbf{Input:} Grid $G$, Obstacles $O$, Policy $\pi$, Horizon $T_{int}$.
			\State \textbf{Output:} FTLE field $\sigma$.
			\Statex \Comment{\textit{Step 1: Compute Flow Map $\Phi$}}
			\State Initialize map $\Phi$; $f(s) \gets \text{Transition}(s, \pi(s))$
			\For{each state $s_0 \in (G \setminus O)$}
			\State $s_{current} \gets s_0$
			\For{$k=1$ to $T_{int}$}
			\State $s_{current} \gets f(s_{current})$
			\EndFor
			\State $\Phi(s_0) \gets s_{current}$
			\EndFor
			\Statex \Comment{\textit{Step 2: Compute FTLE field $\sigma$}}
			\State Initialize $\sigma$ as a zero matrix.
			\For{each state $s_0 = (r,c) \in (G \setminus O)$}
			\State $\boldsymbol{\phi}_{s0} \gets \Phi(s_0)$
			\State $\boldsymbol{u_1} \gets \Phi(s_0 + (0, 1)) - \boldsymbol{\phi}_{s0}$
			\State $\boldsymbol{u_2} \gets \Phi(s_0 + (1, 0)) - \boldsymbol{\phi}_{s0}$
			\State $\boldsymbol{J} \gets [\boldsymbol{u_1}^T, \boldsymbol{u_2}^T]$
			\State $\boldsymbol{C} \gets \boldsymbol{J}^T \boldsymbol{J}$
			\State $\lambda_{\max} \gets \max(\text{eigenvalues}(\boldsymbol{C}))$
			\If{$\lambda_{\max} > 0$}
			\State $\sigma(s_0) \gets \frac{1}{2 \cdot T_{int}} \ln(\lambda_{\max})$
			\EndIf
			\EndFor
			\State \textbf{return} $\sigma$
			\EndProcedure
		\end{algorithmic}
	\end{algorithm}
	
	\subsubsection{Mean Boundary Repulsion (MBR)}
	\paragraph{Complexity Analysis} Identifying the boundary requires iterating through all $|S|$ valid states and checking a constant number of neighbors. The complexity is $\boldsymbol{O(|S|)}$.
	
	\begin{algorithm}[!b]
		\caption{Mean Boundary Repulsion (MBR)}
		\label{alg:mbr}
		\begin{algorithmic}[1]
			\Procedure{CalculateMBR}{$\sigma$, Grid $G$, Obstacles $O$}
			\State Initialize boundary set $\partial O \gets \emptyset$
			\State Define neighbors $D \gets \{(0,1), (0,-1), (1,0), (-1,0)\}$
			\For{each state $s \in (G \setminus O)$}
			\For{each direction $d \in D$}
			\If{$(s + d) \in O$}
			\State Add $s$ to $\partial O$; \textbf{break}
			\EndIf
			\EndFor
			\EndFor
			\If{$|\partial O| = 0$} \textbf{return} $0$ \EndIf
			\State $MBR \gets \frac{1}{|\partial O|} \sum_{s \in \partial O} \sigma(s)$
			\State \textbf{return} $MBR$
			\EndProcedure
		\end{algorithmic}
	\end{algorithm}
	
	\subsubsection{Aggregated Spurious Attractor Strength (ASAS)}
	\paragraph{Complexity Analysis} The main work involves iterating through all $|S|$ states to find local peaks in the attractor field $h$. This is a constant-time check per state, so complexity is $\boldsymbol{O(|S|)}$.
	
	\begin{algorithm}[!t]
		\caption{Aggregated Spurious Attractor Strength (ASAS)}
		\label{alg:asas}
		\begin{algorithmic}[1]
			\Procedure{CalculateASAS}{$h$, $G_{goal}$, $\alpha$}
			\State $h_{goal} \gets \max_{s \in G_{goal}} h(s)$
			\If{$h_{goal} = 0$} \textbf{return} $\infty$ \EndIf
			\State $P_{\text{spurious}} \gets \emptyset$; $P_{\text{sig}} \gets \emptyset$
			\For{each state $s \in (G \setminus G_{goal})$}
			\If{$h(s)$ is a local maximum} Add $s$ to $P_{\text{spurious}}$ \EndIf
			\EndFor
			\For{each peak $p \in P_{\text{spurious}}$}
			\If{$h(p) \ge \alpha \cdot h_{goal}$} Add $p$ to $P_{\text{sig}}$ \EndIf
			\EndFor
			\State $h_{\text{agg\_spurious}} \gets \sum_{p \in P_{\text{sig}}} h(p)$
			\State \textbf{return} $h_{\text{agg\_spurious}} / h_{goal}$, $P_{\text{sig}}$
			\EndProcedure
		\end{algorithmic}
	\end{algorithm}
	
	\subsubsection{Temporally-Aware Spurious Attractor Strength (TASAS)}
	\paragraph{Complexity Analysis} This metric iterates through each significant spurious peak, $|P_{\text{sig}}|$, and runs $N_{\text{sim}}$ simulations for $T_{\text{escape}}$ steps. The complexity is $\boldsymbol{O(|P_{\textbf{sig}}| \cdot N_{\textbf{sim}} \cdot T_{\textbf{escape}})}$.
	
	\begin{algorithm}[!t]
		\caption{Temporally-Aware Spurious Attractor Strength (TASAS)}
		\label{alg:tasas}
		\begin{algorithmic}[1]
			\Procedure{CalculateTASAS}{$h, P_{\text{sig}}, h_{goal}, \pi, N, T_{esc}$}
			\If{$h_{goal} = 0$} \textbf{return} $\infty$ \EndIf
			\State $h_{\text{tasas}} \gets 0$
			\For{each peak $p \in P_{\text{sig}}$}
			\State $escaped \gets 0$
			\For{$i=1$ to $N$}
			\State $s \gets p$
			\For{$k=1$ to $T_{esc}$}
			\State $s \gets \text{Transition}(s, \pi(s))$
			\If{$s$ is in Goal}
			\State $escaped \gets escaped + 1$; \textbf{break}
			\EndIf
			\EndFor
			\EndFor
			\State $E(p) \gets escaped / N$; $\rho(p) \gets 1 - E(p)$
			\State $h_{\text{tasas}} \gets h_{\text{tasas}} + h(p) \cdot \rho(p)$
			\EndFor
			\State \textbf{return} $h_{\text{tasas}} / h_{goal}$
			\EndProcedure
		\end{algorithmic}
	\end{algorithm}
	
	\section{Agent Architecture and Training Protocol}
	\label{sec:agent_training}
	
	To ensure a consistent and fair comparison, we use a single, standardized Deep Q-Network (DQN) agent configuration.
	
	\subsection{Q-Network Architecture}
	
	\begin{figure}[!t]
		\centering
		\includegraphics[width=\columnwidth]{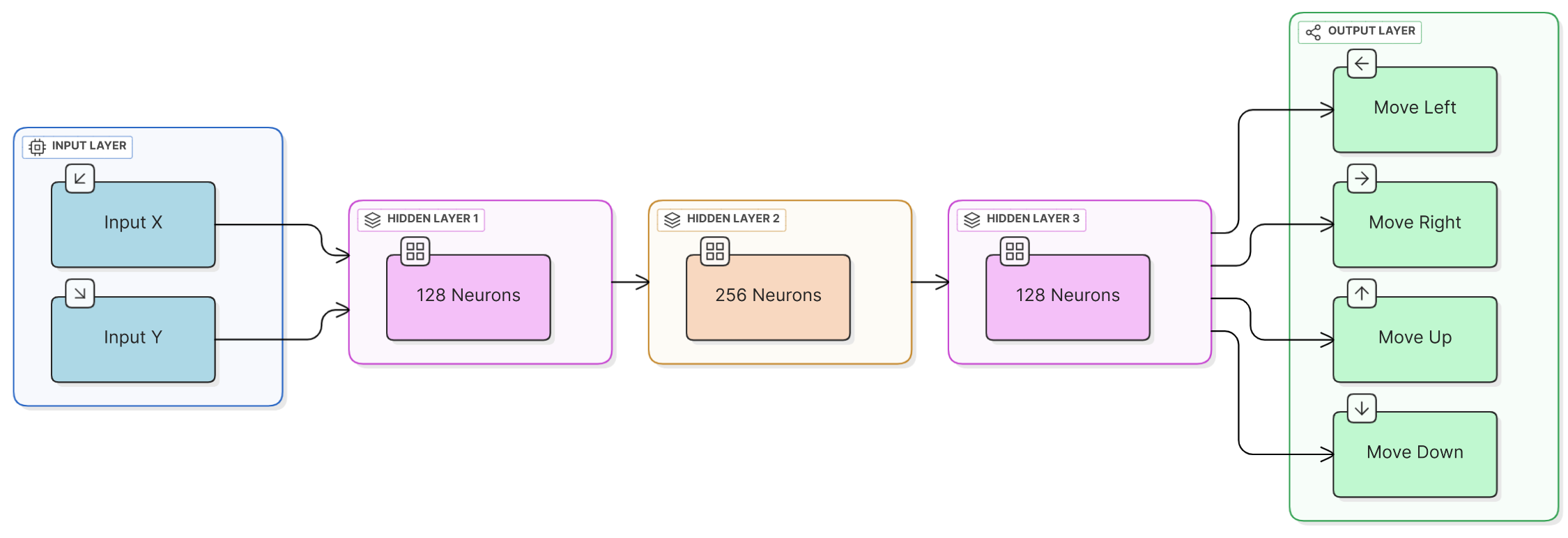}
		\caption{Neural Network Architecture}
		\label{fig:nn_arch}
	\end{figure}
	
	The action-value function, $Q(s, a)$, is approximated using a feed-forward neural network in PyTorch.
	\begin{itemize}
		\item \textbf{Input Layer:} A 2D state vector $(x, y)$.
		\item \textbf{Hidden Layers:} Three hidden layers with 128, 256, and 128 neurons, respectively, each followed by a ReLU activation.
		\item \textbf{Output Layer:} A linear layer producing Q-values for each discrete action.
	\end{itemize}
	
	\subsection{Training Details}
	The agent is trained using the standard DQN algorithm with experience replay and a fixed Q-target.
	\begin{itemize}
		\item \textbf{Replay Memory:} Capacity of 10,000 transitions.
		\item \textbf{Optimization:} Adam optimizer with learning rate $1 \times 10^{-4}$ and Mean Squared Error loss.
		\item \textbf{Target Network:} Weights updated periodically from the policy network.
		\item \textbf{Hyperparameters:} Discount factor $\gamma = 0.99$.
		\item \textbf{Exploration:} An $\epsilon$-greedy strategy, with $\epsilon$ annealed over time. All policy evaluations in this paper are performed in a purely greedy mode ($\epsilon=0$).
	\end{itemize}
	The same protocol is used across all environments to isolate the effect of environmental complexity.

	\section{Experiments and Results}
	\label{sec:experiments}
	
	We validate our framework using a DQN agent trained in three 2D grid-worlds: a \textbf{Simple Wall}, \textbf{Scattered Blocks}, and a \textbf{U-Shape Trap}. We analyze the learned policy at episodes 0, 150, 750, and 1200, generating FTLE fields, attractor plots, and our quantitative metrics (MBR, ASAS, TASAS).
	
	\subsection{Environment 1: Simple Wall}
	This baseline environment tests fundamental avoidance behavior.

    \begin{figure*}[!h]
	\centering
	
	\begin{subfigure}{0.88\textwidth}
    \centering
		\includegraphics[width=\linewidth, height=0.25\textheight, keepaspectratio]{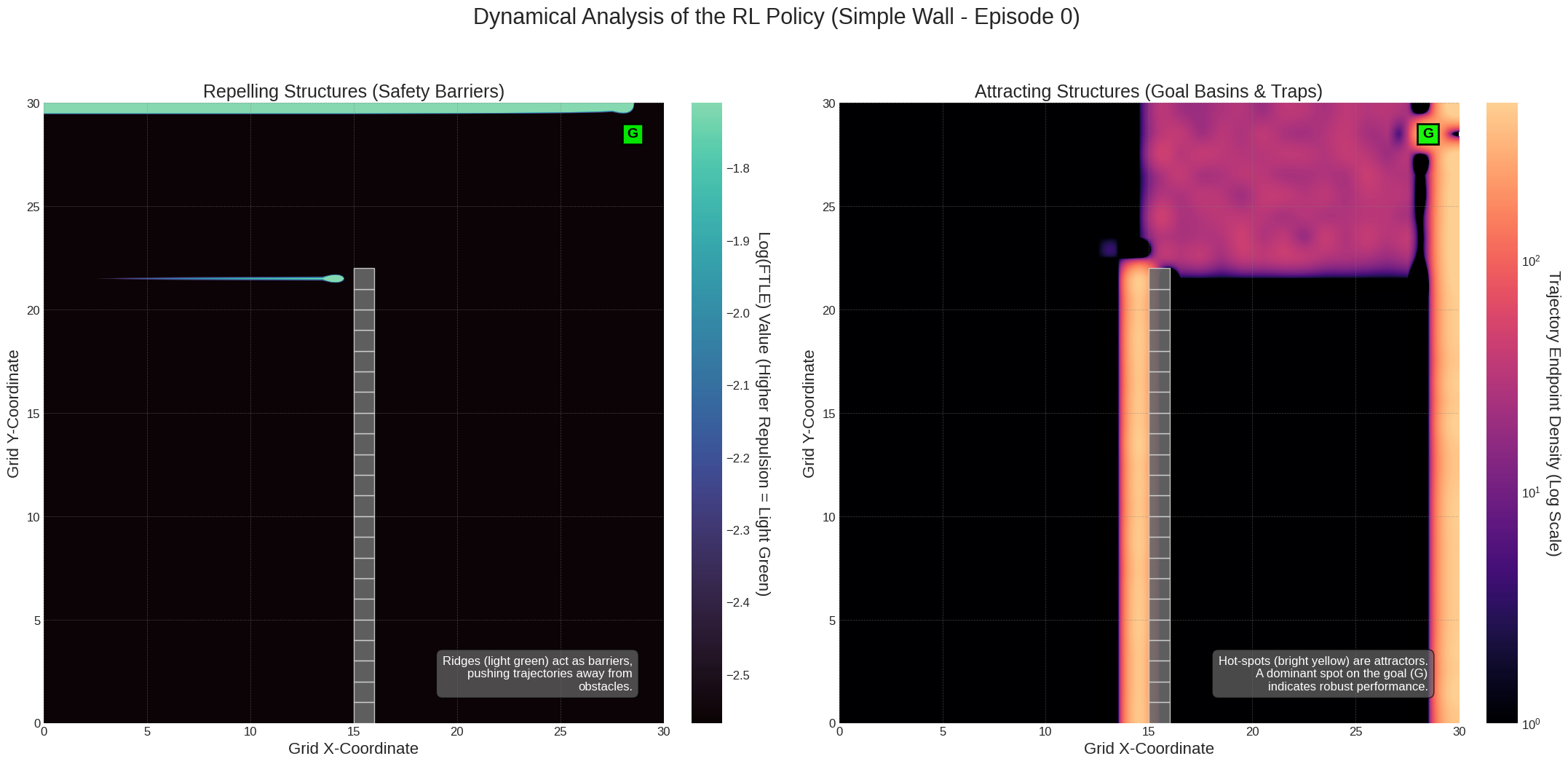}
		\caption{Attraction and repulsion fields in Simple Wall environment at 0 Episodes}
	\end{subfigure}%

	\begin{subfigure}{0.88\textwidth}
    \centering
		\includegraphics[width=\linewidth, height=0.25\textheight, keepaspectratio]{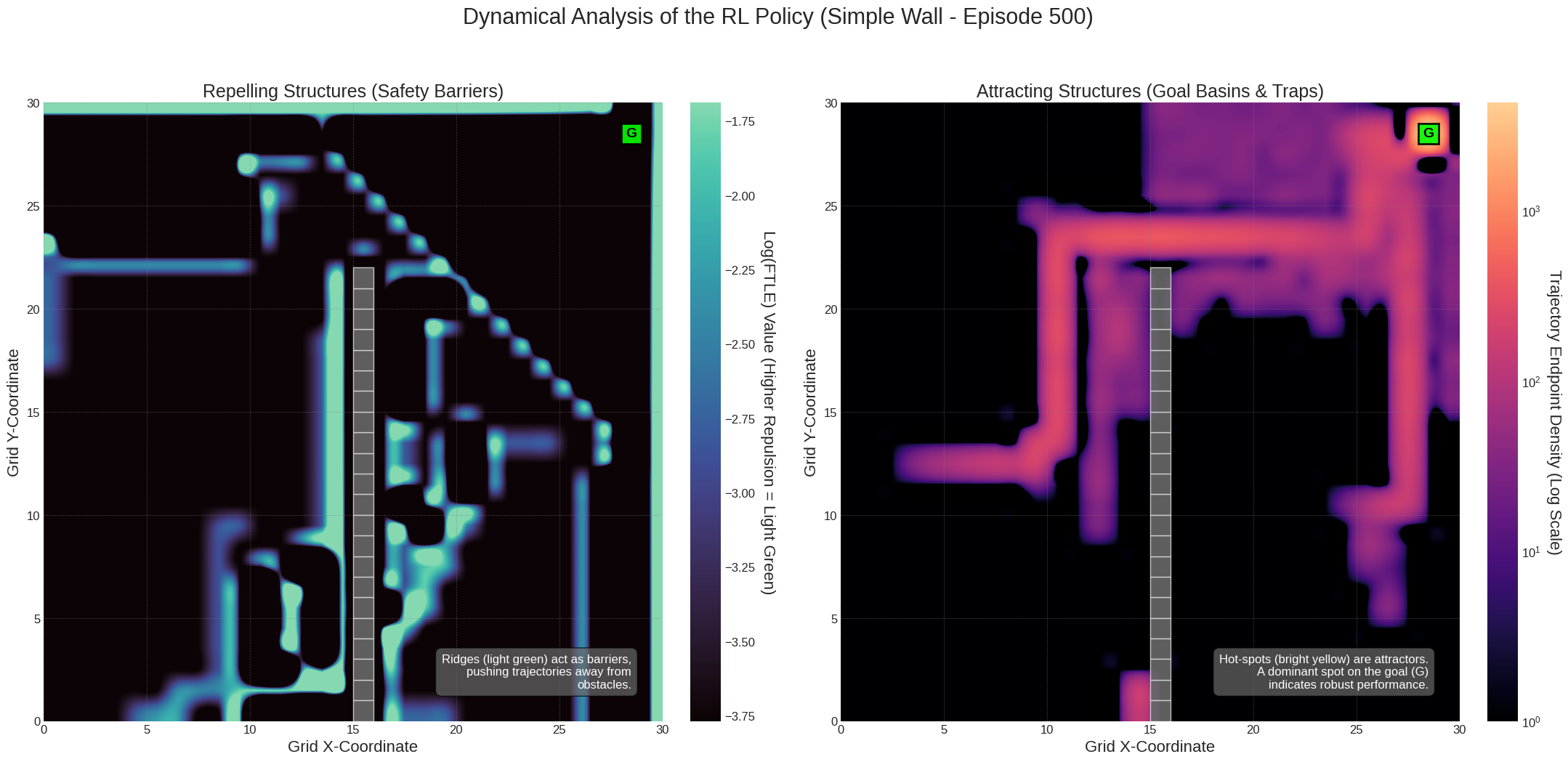}
		\caption{Attraction and repulsion fields in Simple Wall environment at 500 Episodes}
	\end{subfigure}%

	\begin{subfigure}{0.88\textwidth}
	\centering
        \includegraphics[width=\linewidth, height=0.25\textheight, keepaspectratio]{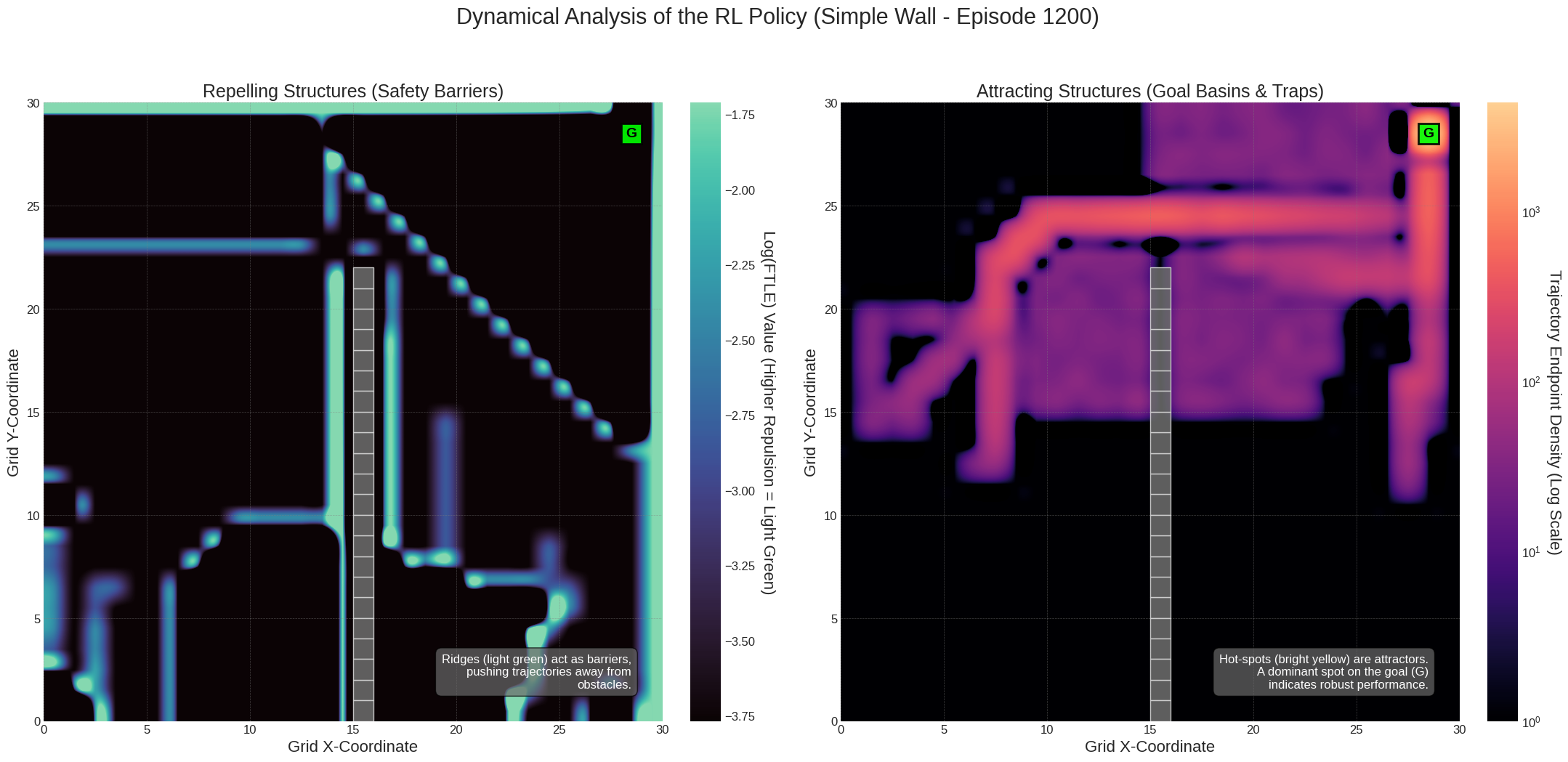}
		\caption{Attraction and repulsion fields in Simple Wall environment at 1200 Episodes}
	\end{subfigure}%
	\hfill

	\caption{Evolution of dynamical structures for the Simple Wall environment. The untrained agent (a) is chaotic. As training progresses (b-c), a distinct repelling LCS (bright ridge) forms along the wall, and the attractor field consolidates around the goal (G).}
		\label{fig:simple_wall_evolution}

    \end{figure*}
	
	\textbf{Visual Analysis:} As shown in Figure \ref{fig:simple_wall_evolution}, the policy evolves from chaos (a) to a highly structured state. By episode 1200 (d), a prominent high-FTLE ridge acts as a safety barrier along the wall, while the attractor plot shows a clear "highway" guiding the agent to the goal.
	
	\textbf{Quantitative Analysis:} Table \ref{tab:metrics_sw} confirms this. The MBR score climbs to a robust 0.0994, quantifying the strong safety barrier. The ASAS and TASAS scores plummet to near-zero, certifying that the mature policy is free of persistent traps.
	
	\begin{table}[!h]
		\centering
		\caption{Metrics for the Simple Wall Environment.}
		\label{tab:metrics_sw}
		\begin{tabular}{@{}cccc@{}}
			\toprule
			\textbf{Episode} & \textbf{MBR} & \textbf{ASAS} & \textbf{TASAS} \\ \midrule
			0                & 0.0040       & 21.1217       & 21.0317        \\
			50               & 0.1029       & 0.4961        & 0.0577         \\
			150              & 0.0901       & 0.7437        & 0.1519         \\
			750              & 0.1058       & 0.7496        & 0.2028         \\
			1200             & 0.0994       & 0.4427        & 0.0544         \\ \bottomrule
		\end{tabular}
	\end{table}

	\subsection{Environment 2: Scattered Blocks}
	This environment requires navigation through a more cluttered space.

    \begin{figure*}[!h]
	\centering
	
	\begin{subfigure}{0.85\textwidth}
    \centering
		\includegraphics[width=\linewidth, height=0.25\textheight, keepaspectratio]{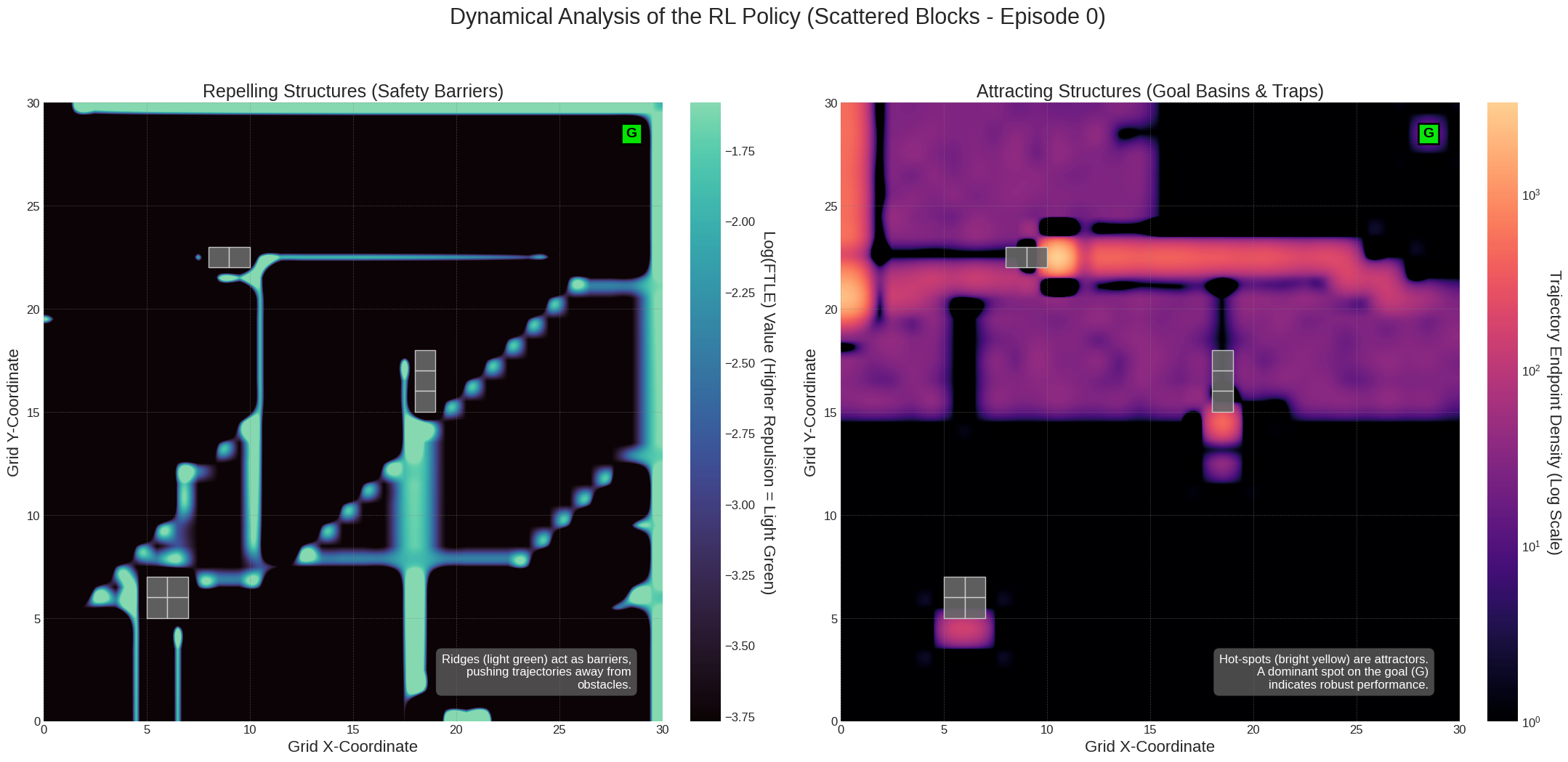}
		\caption{Attraction and repulsion fields in Scattered Blocks environment at 0 Episodes}
	\end{subfigure}%

	\begin{subfigure}{0.85\textwidth}
    \centering
		\includegraphics[width=\linewidth, height=0.25\textheight, keepaspectratio]{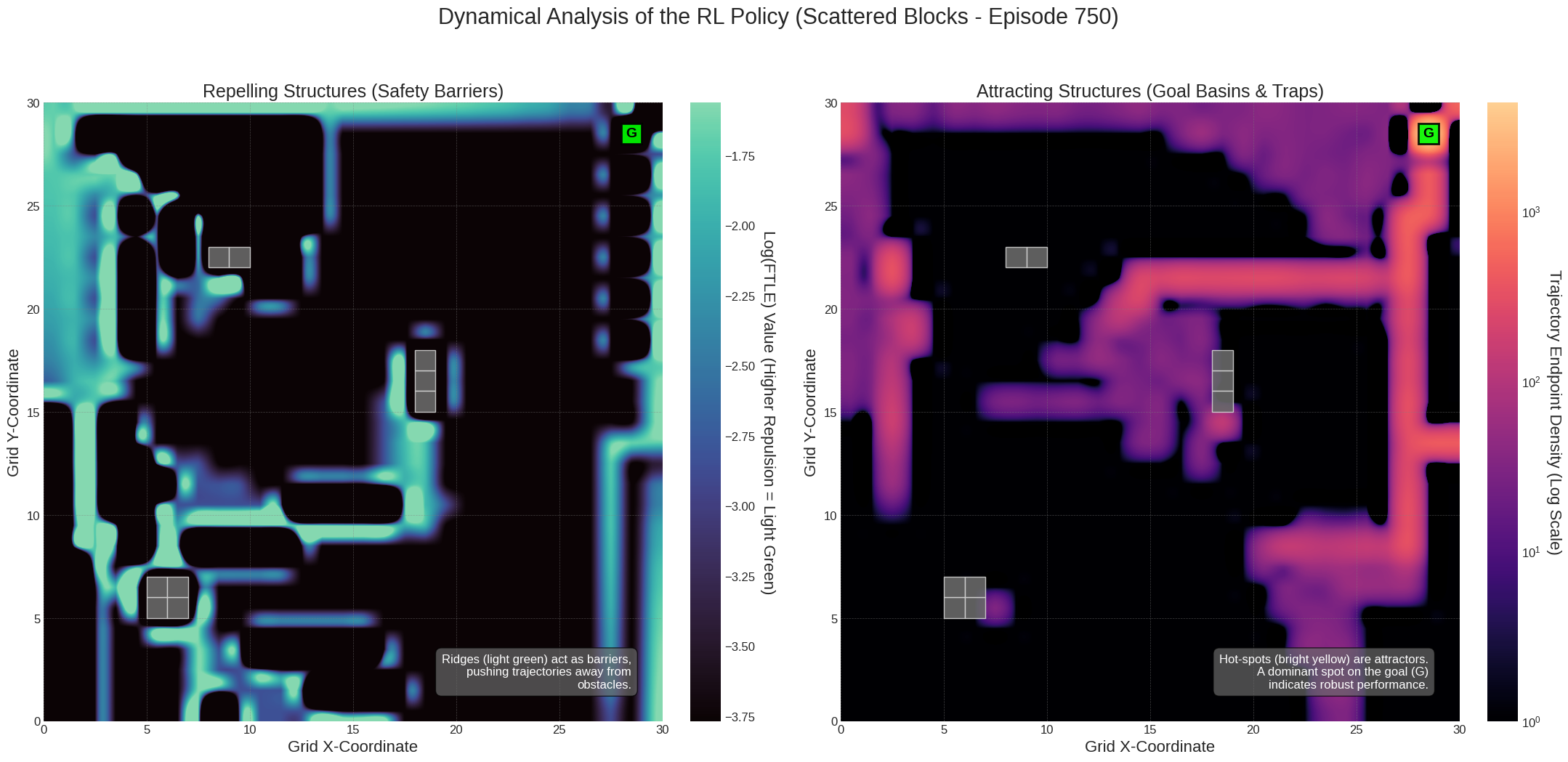}
		\caption{Attraction and repulsion fields in Scattered Blocks environment at 750 Episodes}
	\end{subfigure}%

	\begin{subfigure}{0.85\textwidth}
    \centering
		\includegraphics[width=\linewidth, height=0.25\textheight, keepaspectratio]{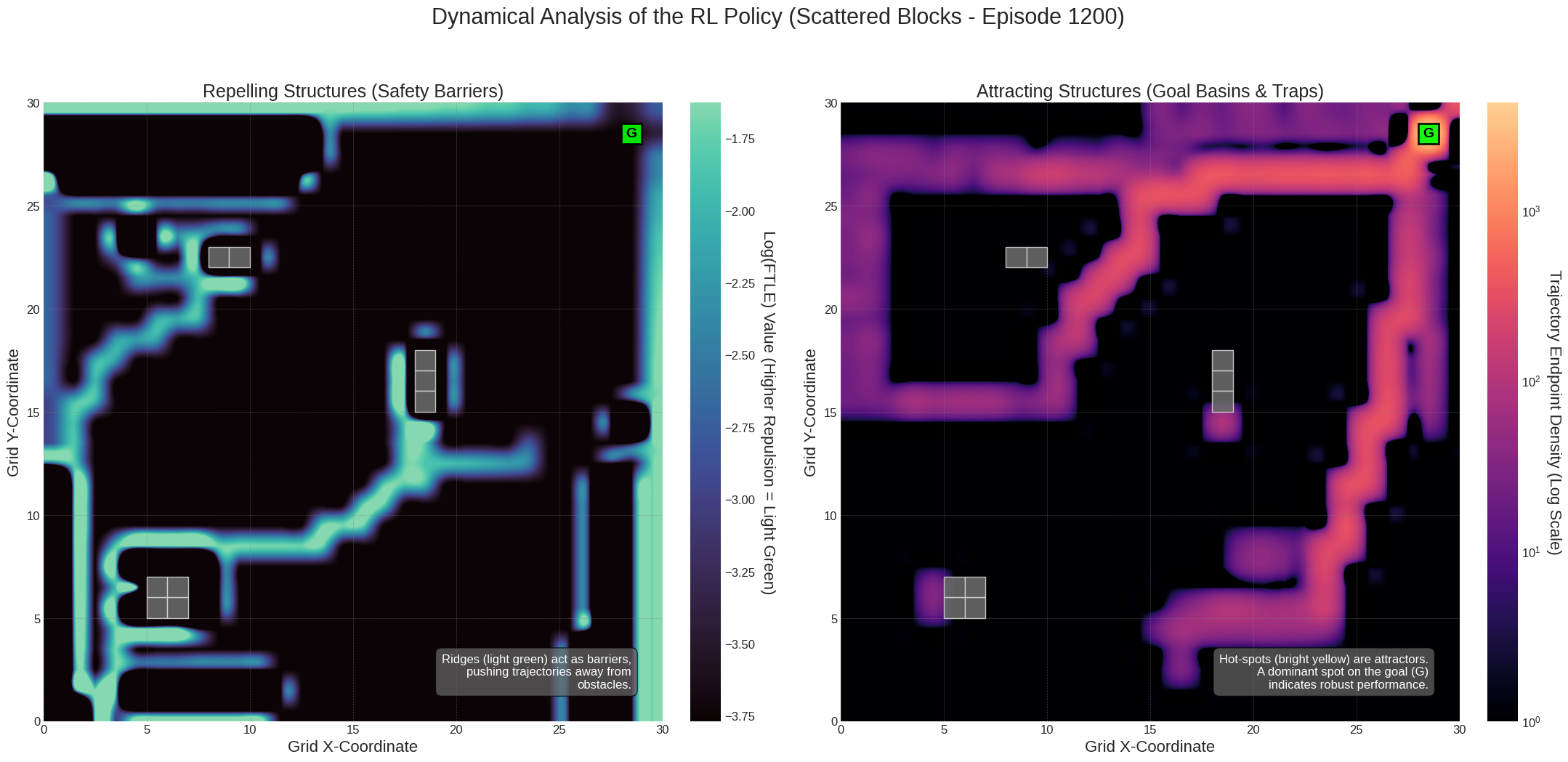}
		\caption{Attraction and repulsion fields in Scattered Blocks environment at 1200 Episodes}
	\end{subfigure}%
	\hfill

	\caption{Dynamical evolution for the Scattered Blocks environment. The agent learns to form localized repelling structures around each obstacle (b-c), creating a safe, winding path to the goal.}
	\label{fig:scattered_blocks_evolution}

    \end{figure*}

	\textbf{Quantitative Analysis:} Table \ref{tab:metrics_sb} shows successful, albeit imperfect, learning. The MBR value stabilizes around a healthy 0.08-0.11. The TASAS value, while low, settles at a non-zero 0.0550. This is a key insight: our framework quantitatively flags that the faint, spurious attractor visible in the final plot (Figure \ref{fig:scattered_blocks_evolution}d) is indeed a minor but persistent trap state.
	
	\begin{table}[H]
		\centering
		\caption{Metrics for the Scattered Blocks Environment.}
		\label{tab:metrics_sb}
		\begin{tabular}{@{}cccc@{}}
			\toprule
			\textbf{Episode} & \textbf{MBR} & \textbf{ASAS} & \textbf{TASAS} \\ \midrule
			0                & 0.0845       & 515.8000      & 515.8000       \\
			150              & 0.0900       & 0.8182        & 0.0501         \\
			750              & 0.1100       & 0.7674        & 0.2244         \\
			1200             & 0.0807       & 0.8311        & 0.0550         \\ \bottomrule
		\end{tabular}
	\end{table}
	
	\subsection{Environment 3: U-Shape Trap}
	This environment is designed to be deceptive, requiring a non-greedy path.
	
        \begin{figure*}[!h]
	\centering
	
	\begin{subfigure}{0.88\textwidth}
    \centering
		\includegraphics[width=\linewidth, height=0.27\textheight, keepaspectratio]{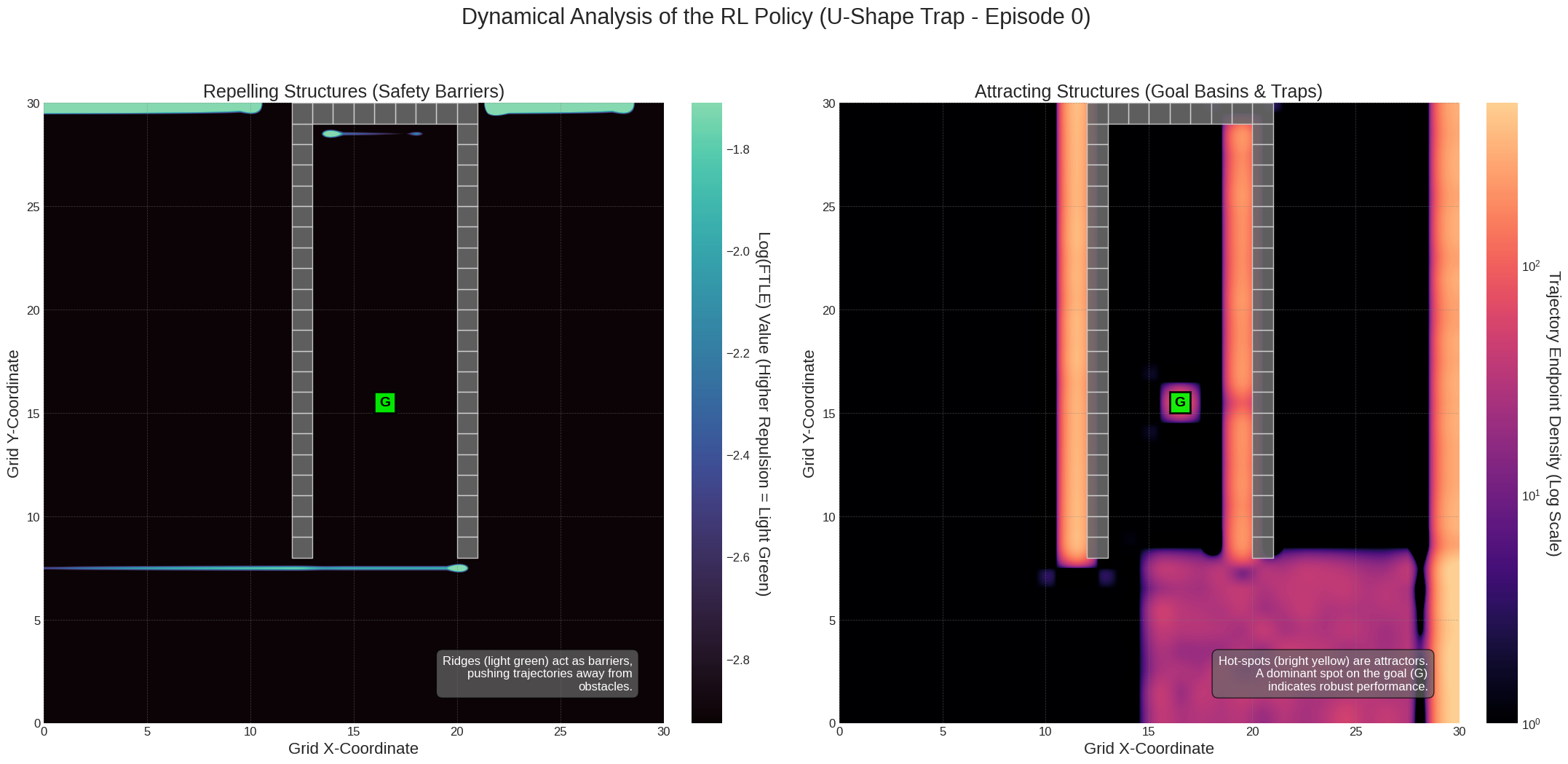}
		\caption{Attraction and repulsion fields in U-Shape Trap environment at 0 Episodes}
	\end{subfigure}%

	\begin{subfigure}{0.88\textwidth}
    \centering
		\includegraphics[width=\linewidth, height=0.27\textheight, keepaspectratio]{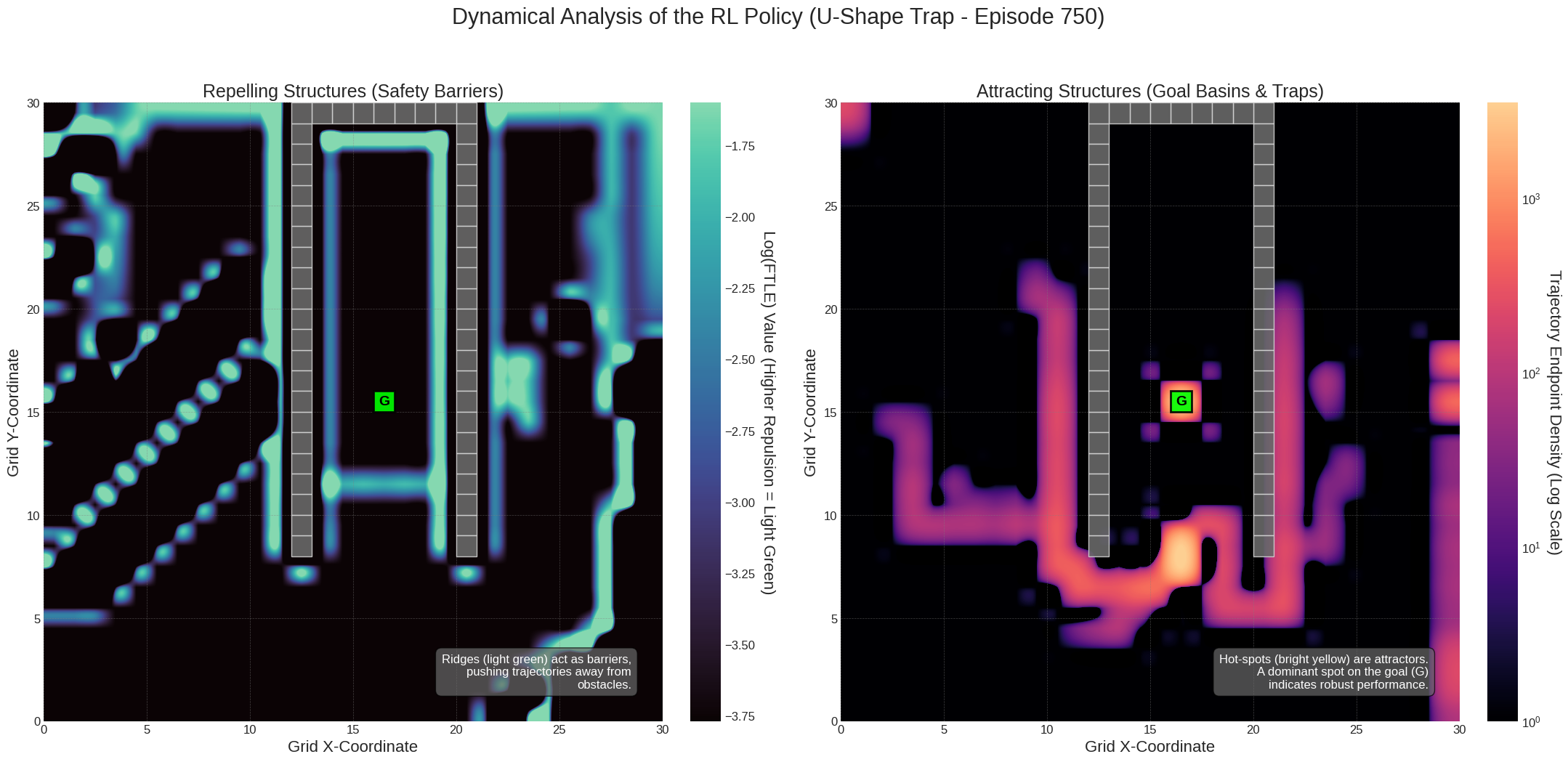}
		\caption{Attraction and repulsion fields in U-Shape Trap environment at 750 Episodes}
	\end{subfigure}%

	\begin{subfigure}{0.88\textwidth}
    \centering
		\includegraphics[width=\linewidth, height=0.27\textheight, keepaspectratio]{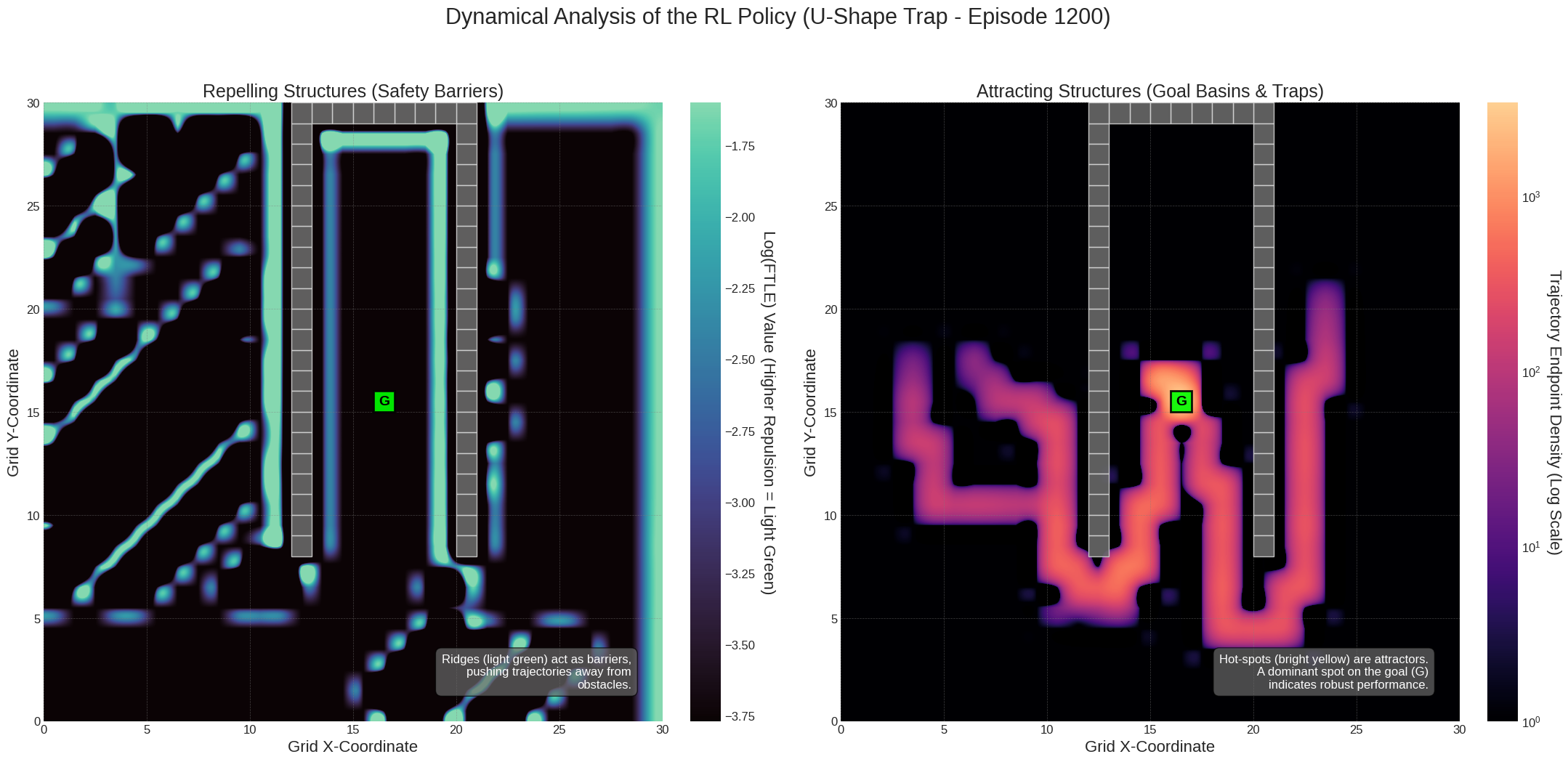}
		\caption{Attraction and repulsion fields in U-Shape Trap environment at 1200 Episodes}
	\end{subfigure}%
	\hfill

	\caption{Dynamical evolution for the U-Shape Trap environment. Visually assessing robustness is difficult, as the agent forms attracting "highways" (b, c) whose persistence is not obvious.}
		\label{fig:u_shape_evolution}

    \end{figure*}
    
	\textbf{Quantitative Analysis:} Our metrics tell a clear story of successful learning (Table \ref{tab:metrics_us}). MBR improves to a strong 0.0911. The evolution of ASAS and TASAS is especially insightful. The ASAS value remains relatively high (1.2258), which might suggest poor robustness. However, the TASAS metric plummets to just 0.0727. This crucial distinction shows that the attracting "highways" are transient waypoints, not terminal traps, powerfully demonstrating the framework's ability to untangle complex behaviors.
	
	\begin{table}[H]
		\centering
		\caption{Metrics for the U-Shape Trap Environment.}
		\label{tab:metrics_us}
		\begin{tabular}{@{}cccc@{}}
			\toprule
			\textbf{Episode} & \textbf{MBR} & \textbf{ASAS} & \textbf{TASAS} \\ \midrule
			0                & 0.0101       & 82.5876       & 82.5876        \\
			150              & 0.0955       & 1.2938        & 1.2938         \\
			750              & 0.0911       & 1.9835        & 1.9835         \\
			1200             & 0.0911       & 1.2258        & 0.0727         \\ \bottomrule
		\end{tabular}
	\end{table}
	
	\textbf{Trajectories in three environments.}
	We have plotted the trajectories on top of the attraction field. Trajectories move towards brighter regions to reach the goal. Figure \ref{fig:grid} shows how trajectories converge onto learned routes that funnel them towards the goal while avoiding obstacles. 

\begin{figure*}[!h]
	\centering
	
	\begin{subfigure}{0.48\textwidth}
		\includegraphics[width=\linewidth, height=0.27\textheight, keepaspectratio]{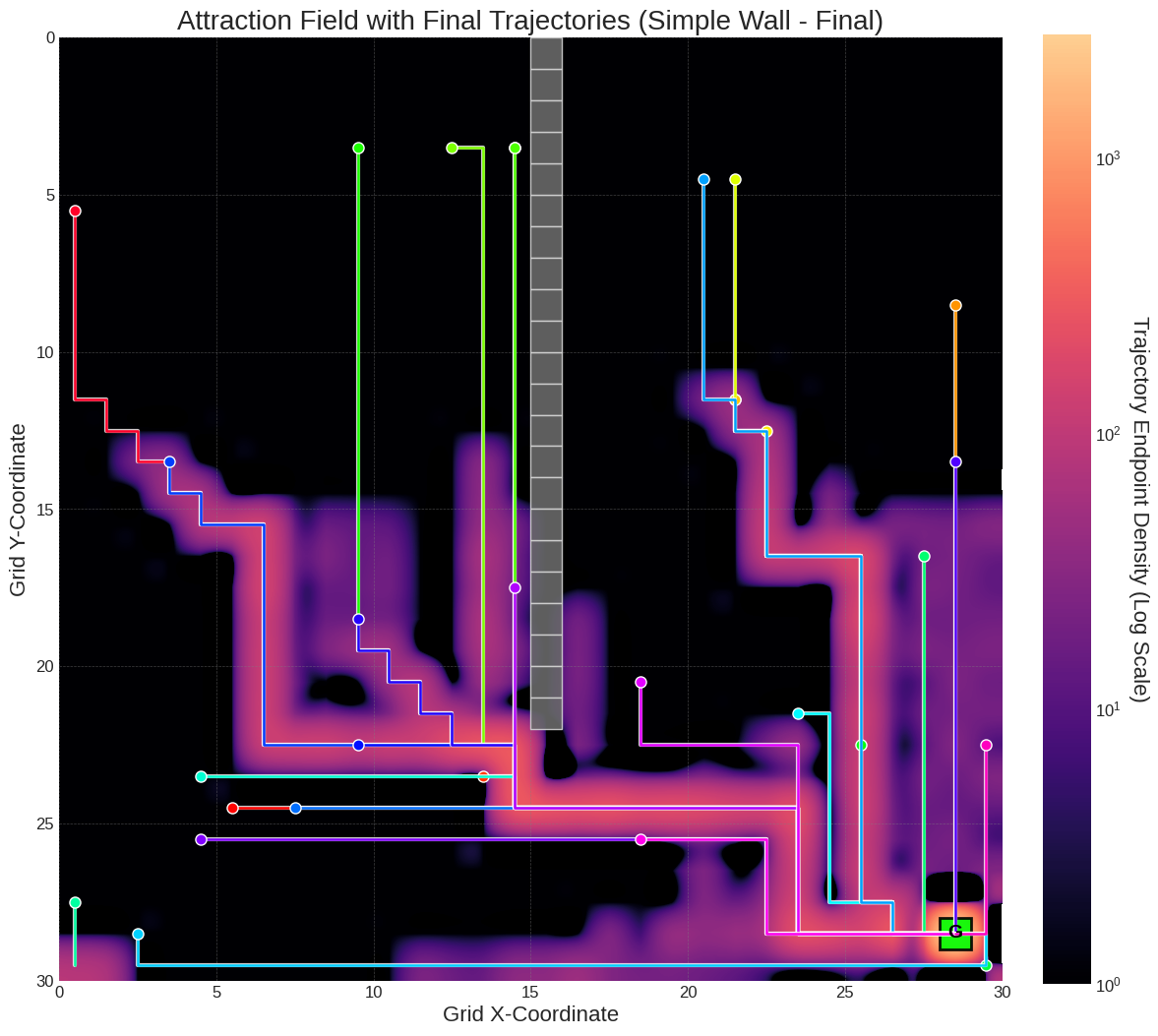}
		\caption{Trajectories and attraction field in simple all}
	\end{subfigure}%
	\hfill
	\begin{subfigure}{0.48\textwidth}
		\includegraphics[width=\linewidth, height=0.27\textheight, keepaspectratio]{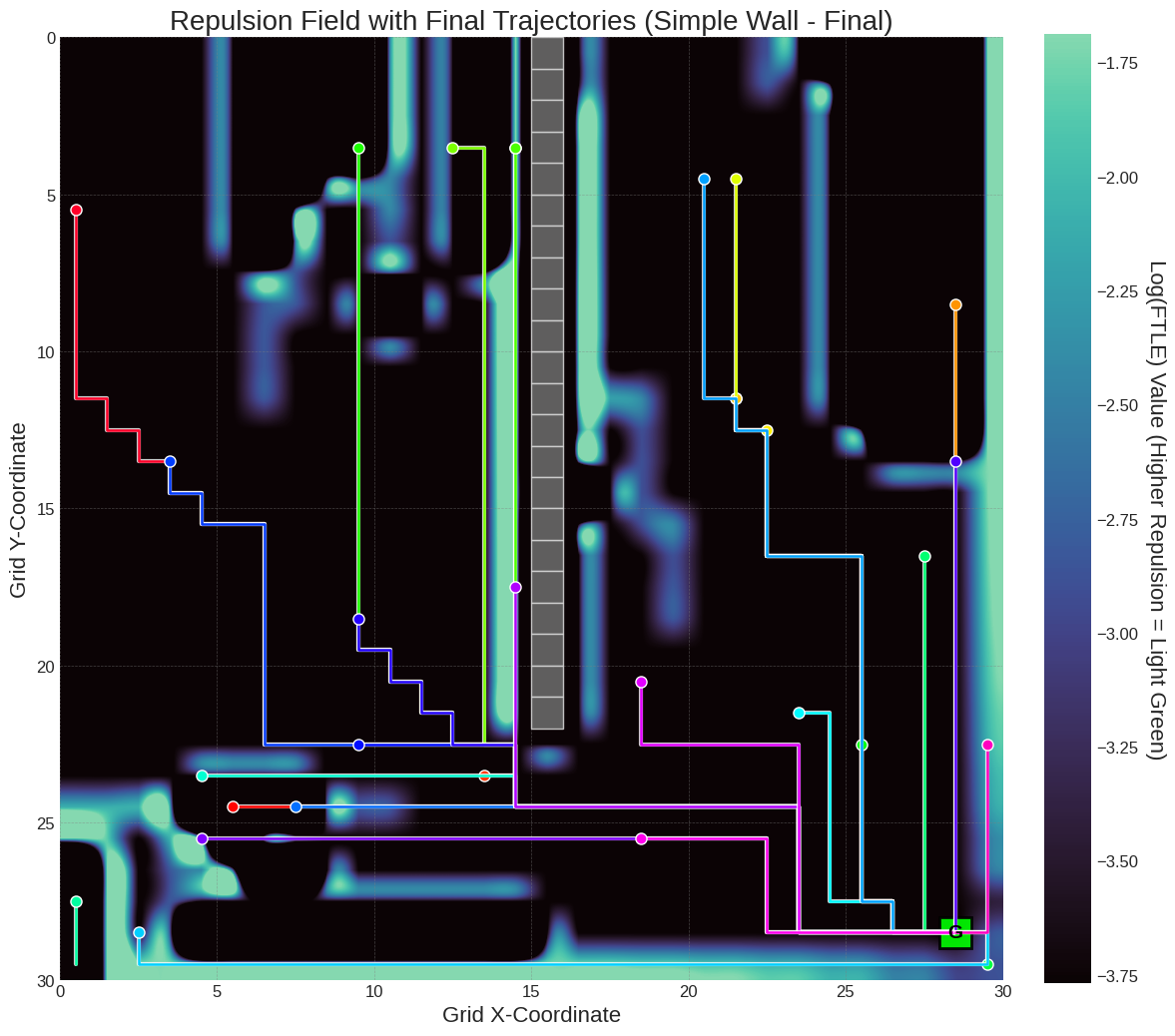}
		\caption{Trajectories and repulsion field in simple wall}
	\end{subfigure}
	
	\begin{subfigure}{0.48\textwidth}
		\includegraphics[width=\linewidth, height=0.27\textheight, keepaspectratio]{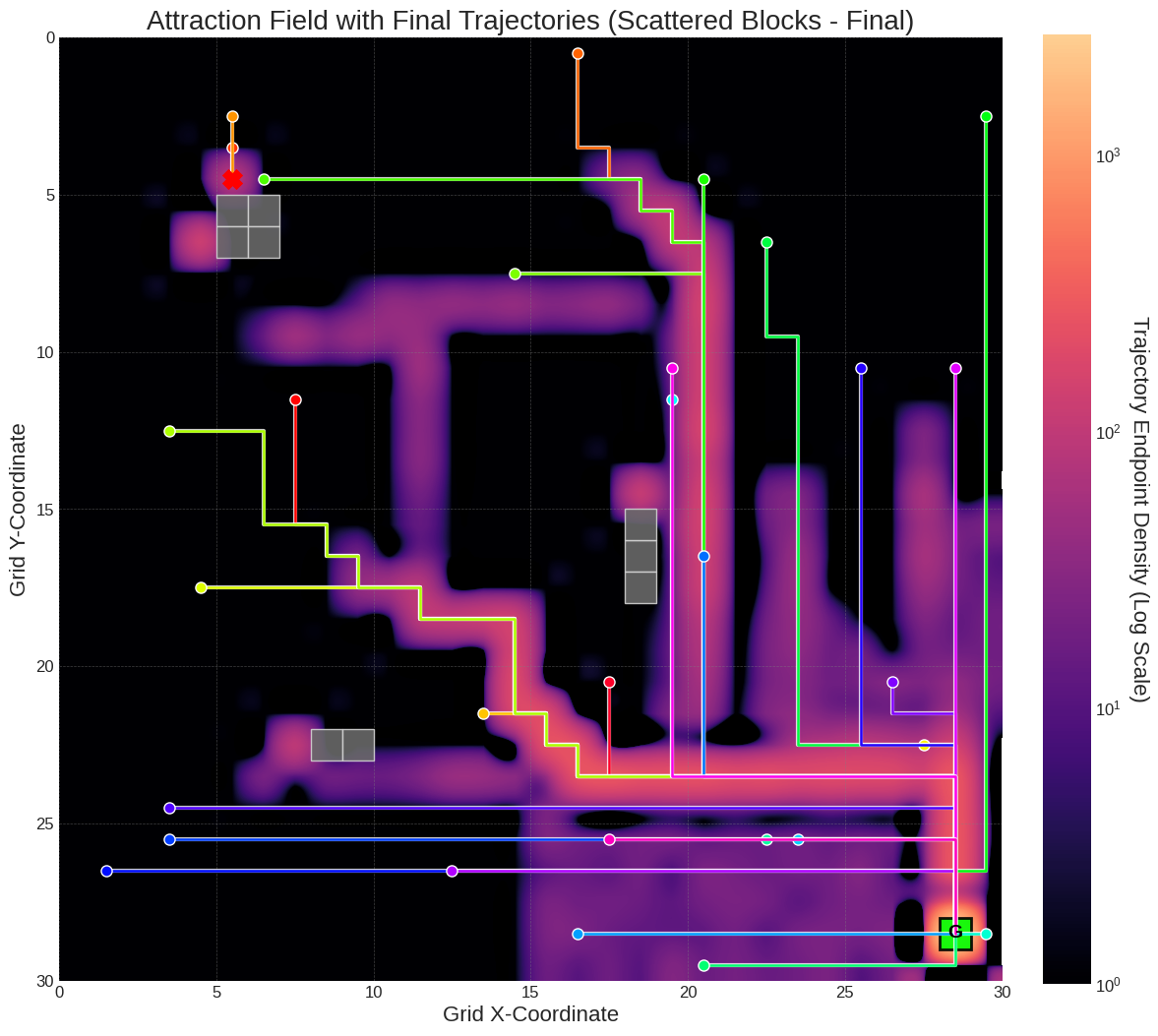}
		\caption{Trajectories and attraction field in scattered blocks environment}
	\end{subfigure}%
	\hfill
	\begin{subfigure}{0.48\textwidth}
		\includegraphics[width=\linewidth, height=0.27\textheight, keepaspectratio]{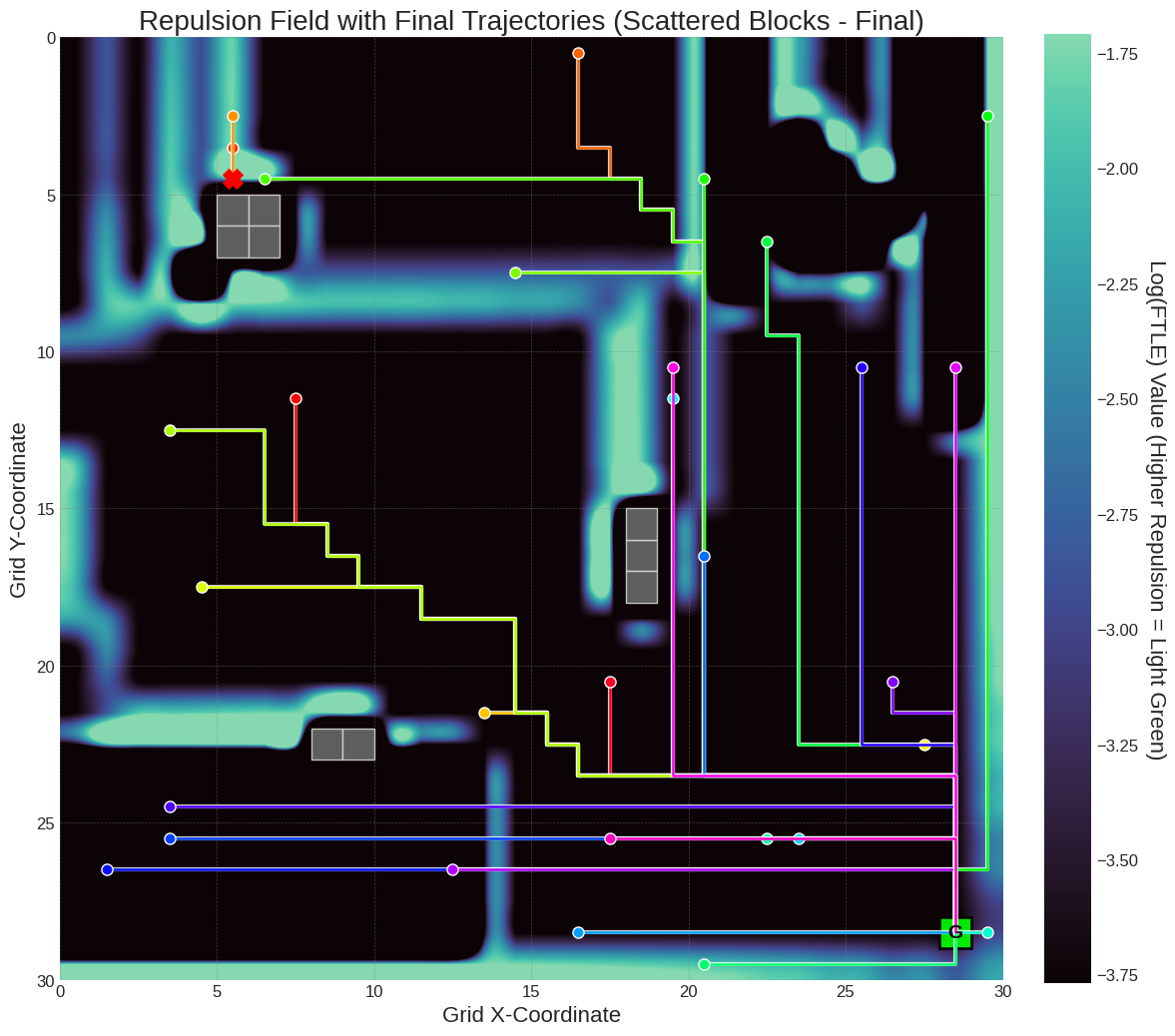}
		\caption{Trajectories and repulsion field in scattered blocks environment}
	\end{subfigure}
	
	\begin{subfigure}{0.48\textwidth}
		\includegraphics[width=\linewidth, height=0.27\textheight, keepaspectratio]{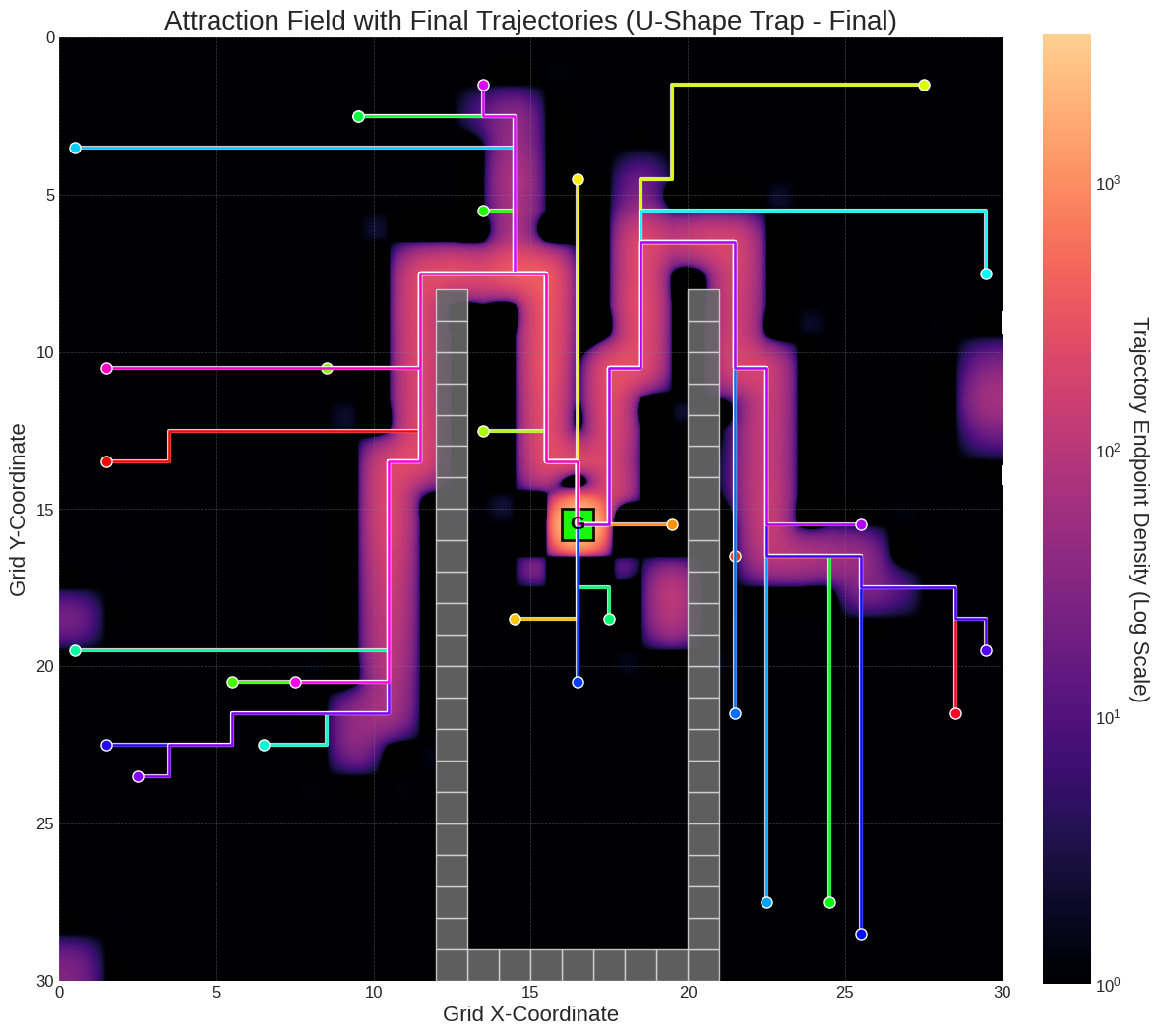}
		\caption{Trajectories and attraction field in U Shaped Trap}
	\end{subfigure}%
	\hfill
	\begin{subfigure}{0.48\textwidth}
		\includegraphics[width=\linewidth, height=0.27\textheight, keepaspectratio]{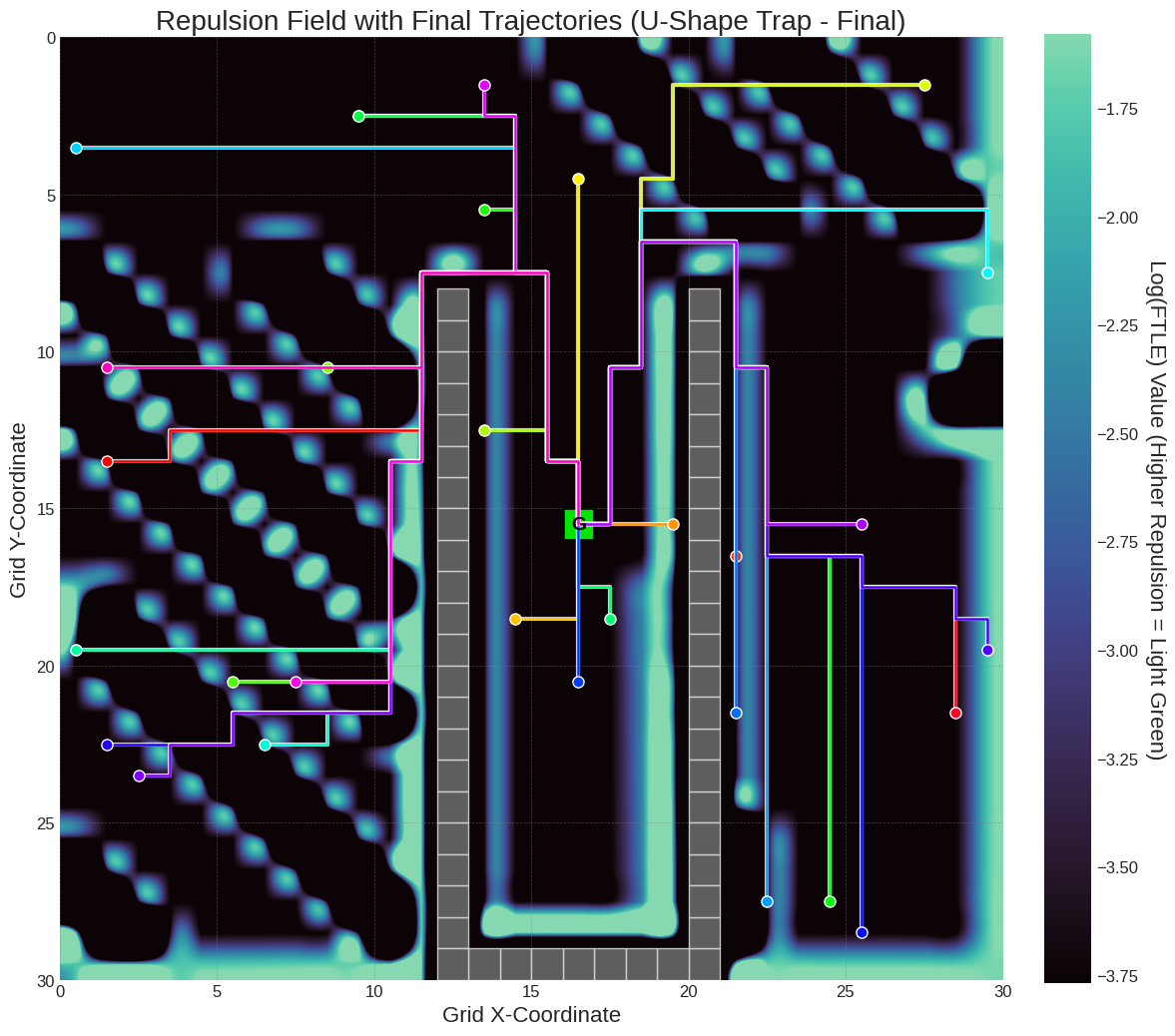}
		\caption{Trajectories and repulsion fields in U Shaped Trap}
	\end{subfigure}
	
	\caption{Trajectories in all three grid environments following the attraction fields and avoiding the repulsion field regions }
	\label{fig:grid}
\end{figure*}

\section{Application to Continuous Control Environments}

To demonstrate the versatility of our framework beyond discrete grid-worlds, we apply it to several classic continuous control environments from the Gymnasium library. For this analysis, we utilize high-performance, pre-trained policies available on the Hugging Face Hub, trained with state-of-the-art algorithms such as Truncated Quantile Critics (TQC), Soft Actor-Critic (SAC), and Twin-Delayed Deep Deterministic Policy Gradient (TD3). This approach allows us to directly analyze well-converged policies and showcases the framework's utility for evaluating agents trained by any standard deep reinforcement learning method.

For environments with a state space dimension greater than two, we analyze 2D slices of the dynamical landscape. This is achieved by fixing the remaining state variables to logical constant values (e.g., zero velocity and zero angle), allowing us to visualize the policy's behavior in a critical subspace of operation.

\subsection{Case Study 1: MountainCarContinuous-v0}

The \texttt{MountainCarContinuous-v0} environment is a classic problem requiring an agent to build momentum to escape a valley. A successful policy must learn a non-trivial, dynamic strategy.

\begin{figure}[!h]
    \centering
    \includegraphics[width=\columnwidth]{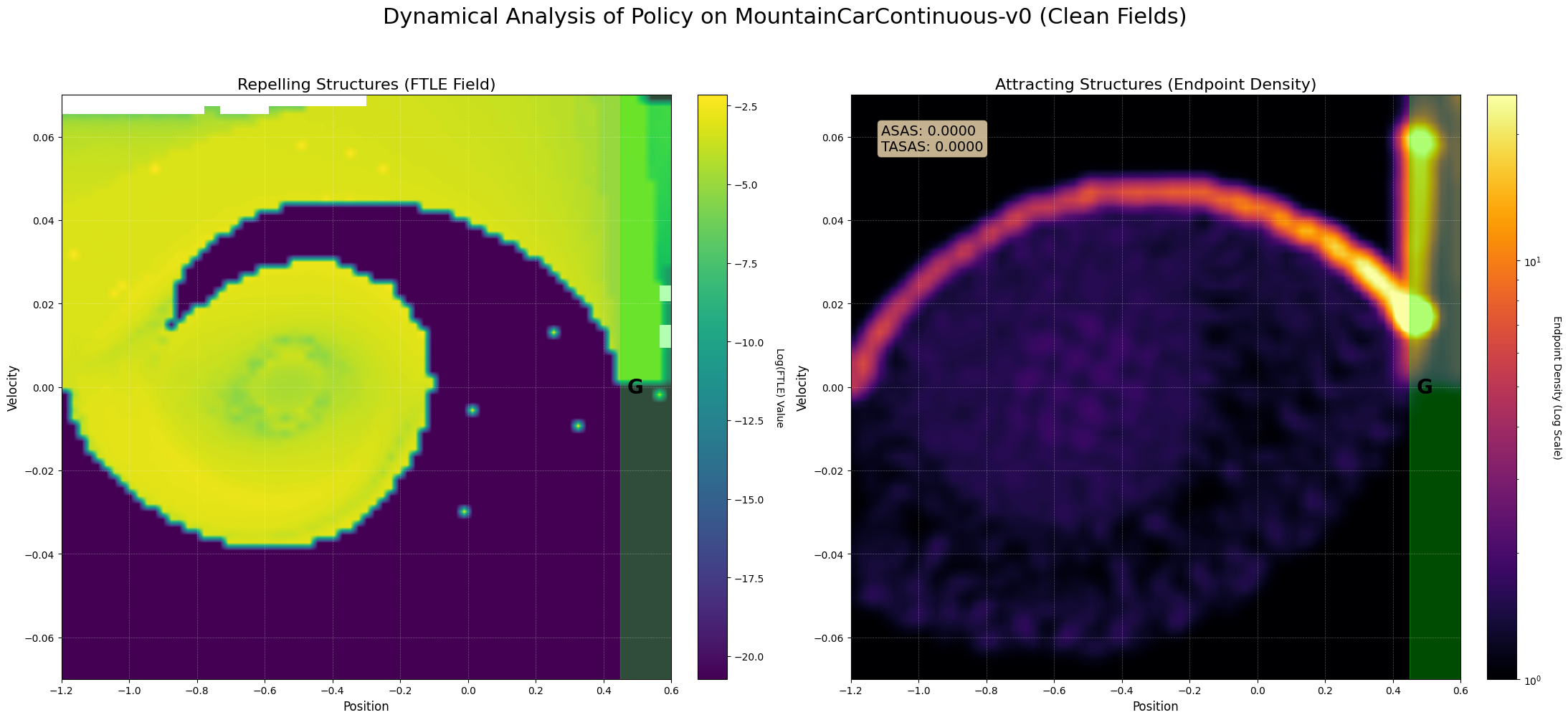}
    \caption{Dynamical analysis of a TQC policy on \texttt{MountainCarContinuous-v0}. The FTLE field (left) reveals a strong repelling boundary corresponding to the valley walls and a spiral structure indicating the momentum-building strategy. The attractor plot (right) shows a clear "highway" of trajectories that successfully reach the goal region (G) on the right.}
    \label{fig:mountain_car_fields}
\end{figure}

\begin{figure}[!h]
    \centering
    \includegraphics[width=\columnwidth]{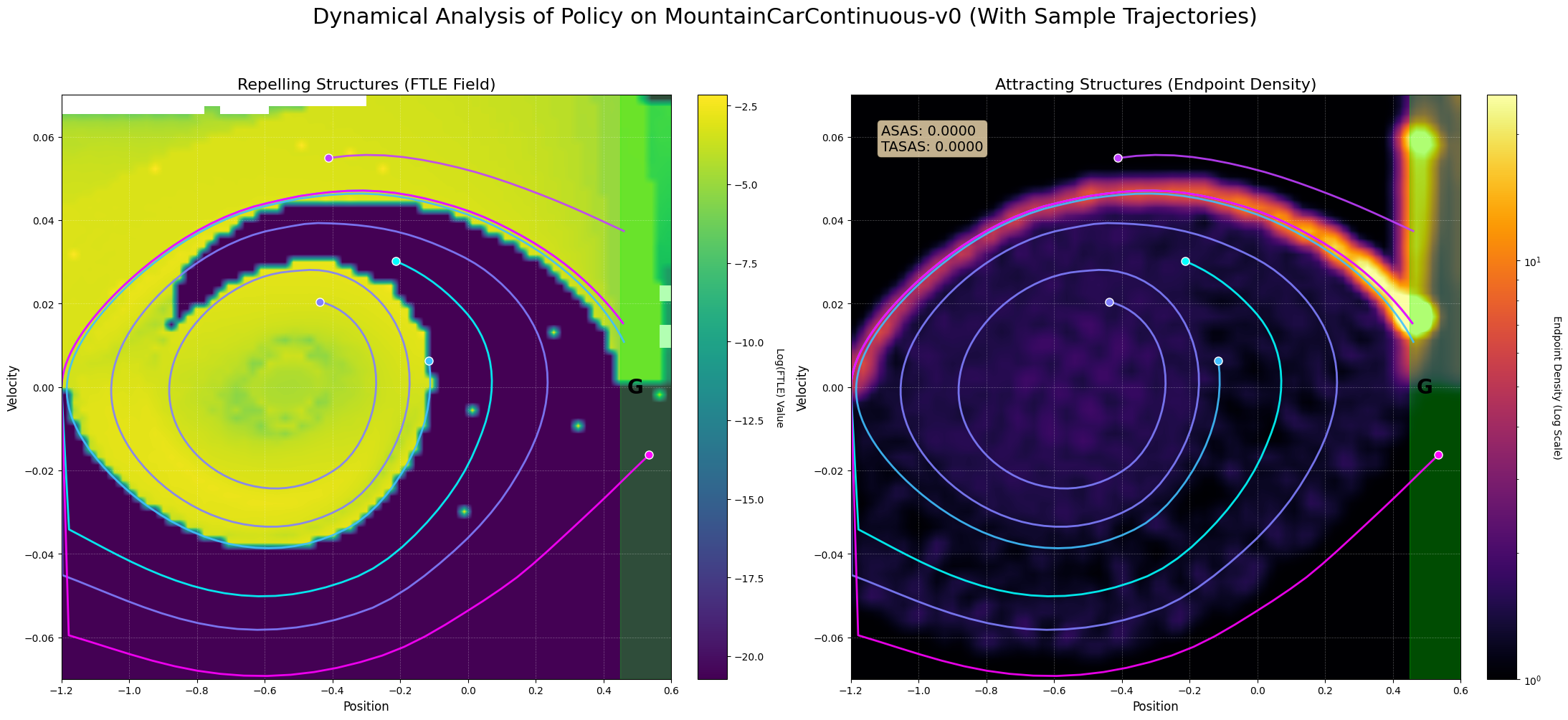}
    \caption{Trajectories on repulsion fields showing how they loop around the spiral avoiding certain regions near 0.00 velocity and gaining momentum and moving towards the goal. On atrraction fields we see transcribers following a specified path with increasing attraction field and moving towards the goal region.}
    \label{fig:mountain_car_traj}
\end{figure}

The analysis in Figure~\ref{fig:mountain_car_fields} reveals a highly robust and successful policy. The FTLE plot on the left clearly delineates the valley walls as strong repelling structures (bright yellow). The spiral pattern in the center of the state space visualizes the agent's learned strategy of oscillating to build velocity. The attractor plot on the right is even more telling: it shows a single, dominant "highway" that collects trajectories from across the state space and guides them directly to the goal region. There are no other significant basins of attraction, indicating that the agent does not get stuck.

\textbf{Quantitative Analysis:} The calculated metrics confirm this visual interpretation. With an ASAS score of 0.0000 and a TASAS score of 0.0000, the policy exhibits ideal robustness. According to our definitions, this signifies the complete absence of significant or persistent spurious attractors. The agent has learned a globally effective strategy.

\subsection{Case Study 2: Pendulum-v1}

The \texttt{Pendulum-v1} environment involves a different challenge: stabilization. The goal is to swing a pendulum upright and maintain its balance at the top (angle = 0, velocity = 0).

\begin{figure}[!h]
    \centering
    \includegraphics[width=\columnwidth]{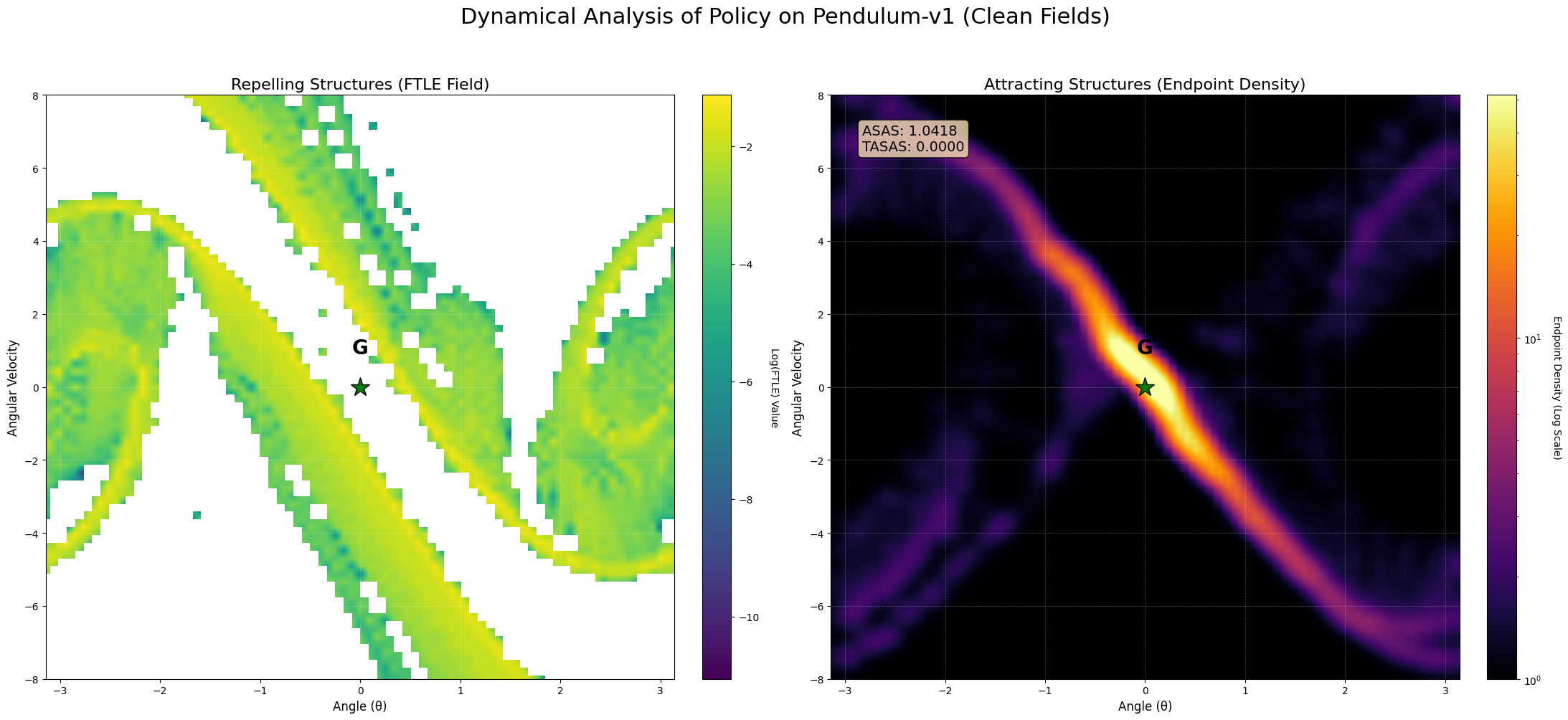}
    \caption{Dynamical analysis of an SAC policy on \texttt{Pendulum-v1}. The FTLE field (left) is largely featureless, indicating stable, non-chaotic behavior. The attractor plot (right) shows an extremely powerful and compact attractor at the goal state (G), representing the upright and stationary position and a "highway" leading to the goal state.}
    \label{fig:pendulum_fields}
\end{figure}

\begin{figure}[!h]
    \centering
    \includegraphics[width=\columnwidth]{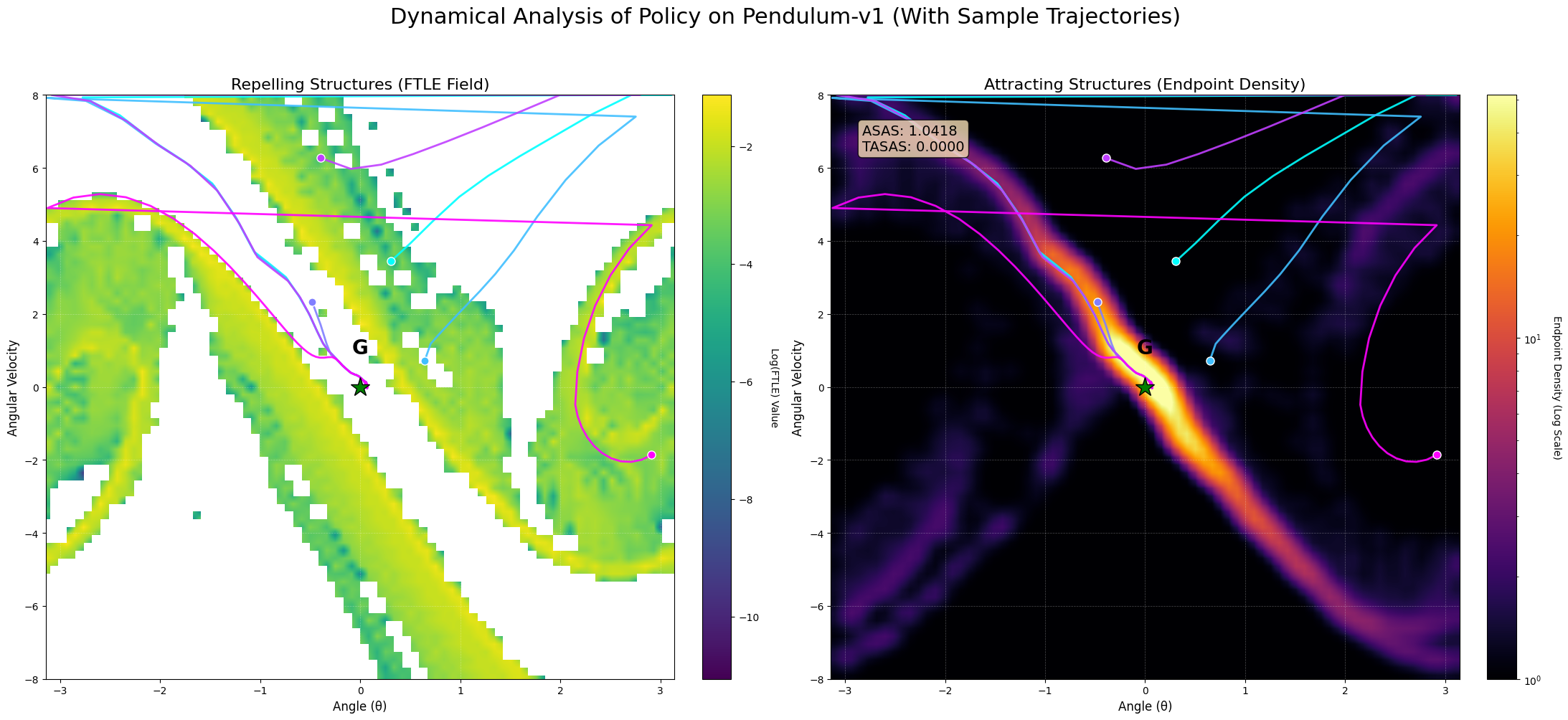}
    \caption{The trajectories follow attraction fields into the goal region while repulsion fields on both sides of the attracting structures make sure the trajectories don't veer to far from the goal region }
    \label{fig:pendulum_traj}
\end{figure}

Figure~\ref{fig:pendulum_fields} illustrates a perfectly learned stabilization policy. The FTLE field is low and flat, which is expected for a stable system where nearby trajectories are not driven apart. The attractor plot is the key result: it shows a single, tight, and dominant hotspot precisely at the goal state. All trajectories in the analyzed state space converge to this single desired equilibrium point.

\textbf{Quantitative Analysis:} The metrics provide definitive confirmation of the policy's quality. Both the ASAS and TASAS scores are 0.0000, signifying perfect robustness. The system contains no competing attractors, and its dynamics are entirely focused on achieving and maintaining the goal state.

\subsection*{Case Study 2.1: Formal Stability Verification for the Pendulum}
We now apply our local stability guarantee (Proposition 4.1) to this successful policy. To demonstrate the utility of our local stability proposition, we analyze the trained Pendulum agent. We define a stable region R as a circle of radius 1.60 around the goal state $([\theta, \omega] = [0, 0])$. From the computed FTLE field, we find the maximum FTLE value within this region to be $\sigma_{\max} \approx 0.7158$ with an integration time of $T_{\text{int}} = 20$ steps.

Using our derived bound, we can formally certify that for any desired final separation tolerance, $\epsilon = 0.05$, the policy is robust to any initial state perturbation smaller than $\delta < 3.0 \times 10^{-8}$. Although the maximum tolerable perturbation is small (due to the exponential nature of the bound and a non-trivial $\sigma_{\max}$), this provides a concrete, formal guarantee of the policy's stability near its equilibrium point. The process is visualized in Figure~\ref{fig:pendulum_stability_viz}.

\begin{figure}[!h]
    \centering
    \includegraphics[width=\columnwidth]{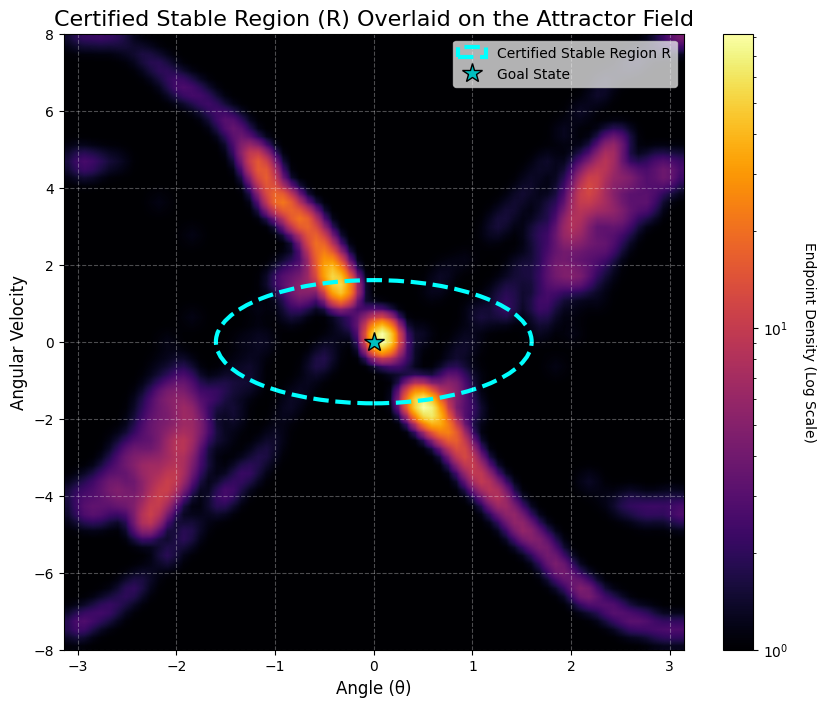}
    \caption{Visualization of the certified stable region R for the Pendulum-v1 policy. The dashed cyan circle represents the region R where the stability guarantee is calculated. This region is overlaid on the learned attractor field, demonstrating that the area of certified stability coincides with the policy's primary goal attractor.}
    \label{fig:pendulum_stability_viz}
\end{figure}

\subsection{Case Study 3: LunarLanderContinuous-v2}
The \texttt{LunarLanderContinuous-v2} environment presents a more complex, 8-dimensional control problem. To analyze it, we examine a 2D slice of the state space corresponding to the lander's (x, y) position, while holding all other states (velocities, angle, leg contact) at zero. This represents the policy's behavior during a critical final descent or hover phase.

\begin{figure}[!h]
    \centering
    \includegraphics[width=\columnwidth]{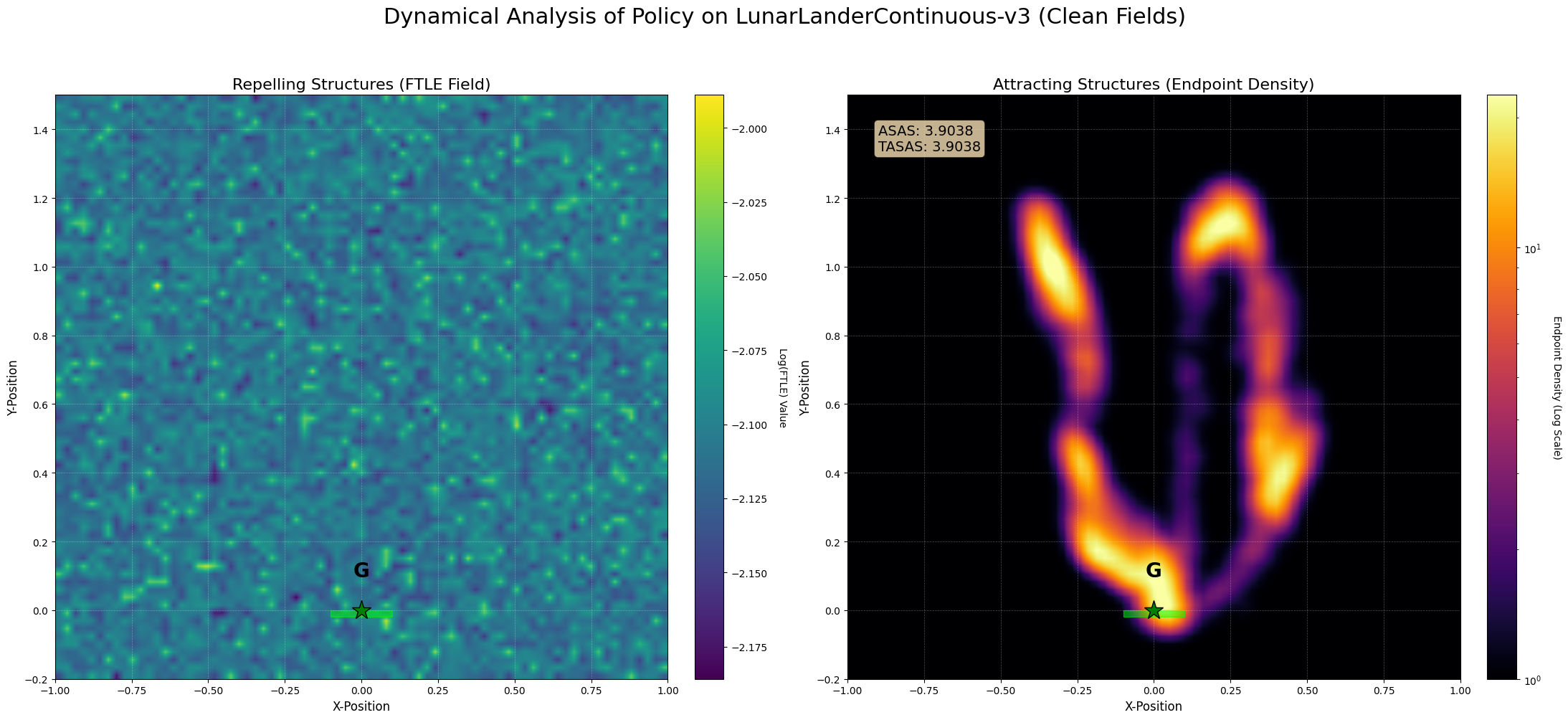}
    \caption{Dynamical analysis of a TD3 policy on a 2D slice of \texttt{LunarLanderContinuous-v2}. The FTLE field (left) is noisy and lacks coherent structures. The attractor plot (right) reveals a critical flaw: while an attractor exists at the goal (G), two powerful spurious attractor "highways" dominate the flow, pulling trajectories away from the landing pad.}
    \label{fig:lunar_lander_fields}
\end{figure}

\begin{figure}[!h]
    \centering
    \includegraphics[width=\columnwidth]{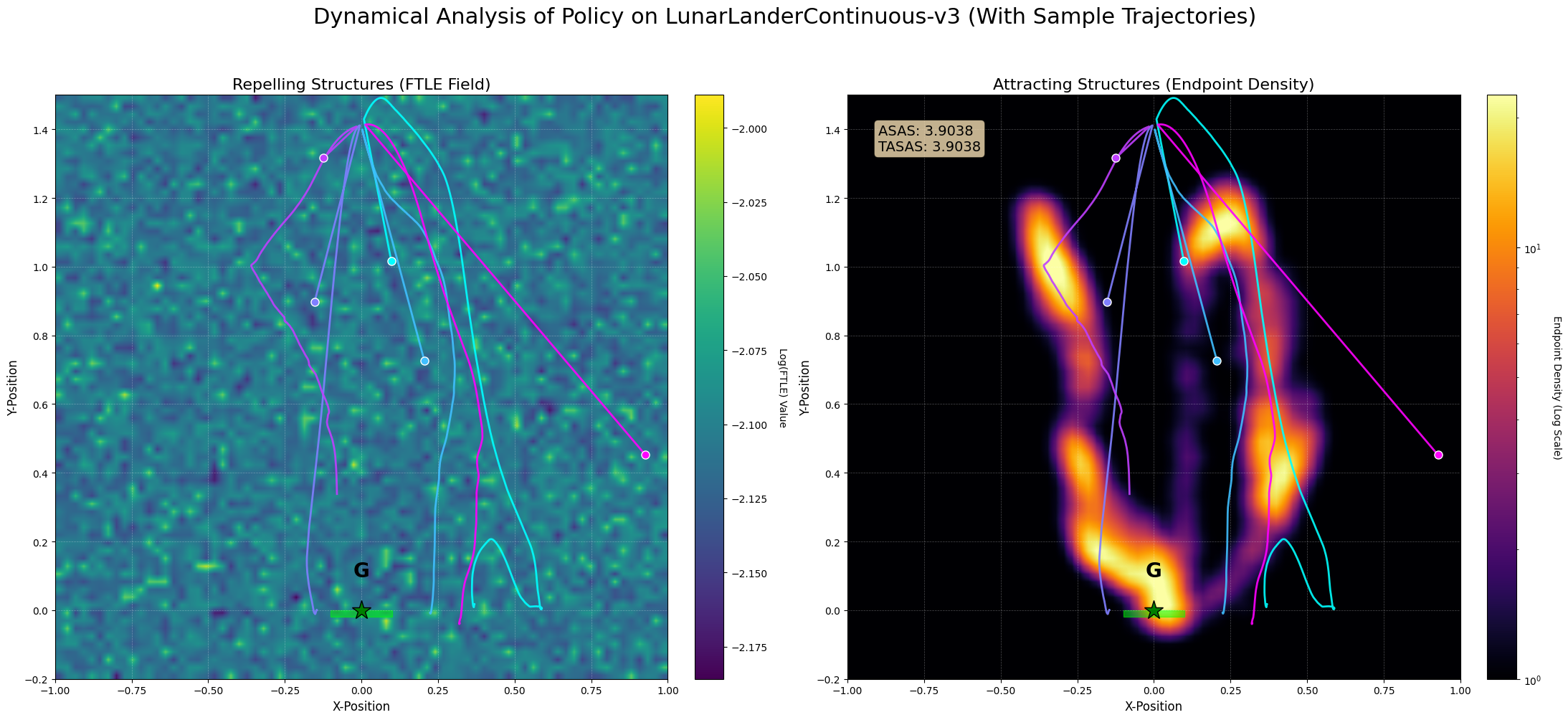}
    \caption{Trajecctoire tend to follow two "highways" to a goal region but there seems to be other regions of high attraction other than the goal. Thus not all trajectories make it to the goal regions and end up getting trapped in the attraction fields.} 
    \label{fig:lunar_lander_traj}
\end{figure}

The analysis in Figure~\ref{fig:lunar_lander_fields} uncovers a critical flaw in the pre-trained policy, demonstrating the diagnostic power of our framework. The FTLE field is noisy and lacks the strong, coherent barriers seen in robust policies. The attractor plot is damning: although some trajectories land at the goal (G), the flow is dominated by two large, bright attractor arms. These structures act as highways that lead trajectories into undesirable looping patterns rather than toward the landing pad.

\textbf{Quantitative Analysis:} The metrics powerfully quantify this failure. The policy yields an ASAS score of 3.9038 and a TASAS score of 3.9038. According to our definitions, an ASAS score greater than 1.0 indicates "extremely poor robustness," meaning the cumulative pull of the spurious attractors is stronger than the pull of the goal. The high TASAS score confirms that these attractors are not transient but are persistent traps from which the agent fails to escape. This policy is demonstrably unsafe, a conclusion that would be difficult to reach without this dynamical systems analysis.

\subsection{Summary of Continuous Control Results}

Table~\ref{tab:continuous_results} summarizes the quantitative safety and robustness metrics for the analyzed policies. The results clearly distinguish between the ideally robust policies for \texttt{MountainCar} and \texttt{Pendulum}, and the critically flawed policy for \texttt{LunarLander}.

\begin{table}[H]
\centering
\caption{Quantitative robustness metrics for continuous control policies.}
\label{tab:continuous_results}
\begin{tabular}{lrr}
\hline
\textbf{Environment} & \textbf{ASAS Score} & \textbf{TASAS Score} \\ \hline
\texttt{MountainCarContinuous-v0} & 0.0000 & 0.0000 \\
\texttt{Pendulum-v1} & 0.0000 & 0.0000 \\
\texttt{LunarLanderContinuous-v2} & 3.9038 & 3.9038 \\ \hline
\end{tabular}
\end{table}

This extension to continuous domains validates that our framework is a general and powerful tool for RL verification, capable of providing deep insights and uncovering critical failure modes that are not apparent from standard performance metrics like cumulative reward.
	\section{Conclusion}
	This paper introduced a novel framework for the verification of safety and robustness in reinforcement learning agents by treating the combined policy-environment system through the lens of dynamical systems theory. By applying the Finite-Time Lyapunov Exponent (FTLE) and analyzing Lagrangian Coherent Structures (LCS), we demonstrated a powerful method for visualizing the underlying dynamical mechanisms of a learned policy. Repelling LCS were shown to correspond to safety barriers, while the system's attractors revealed its convergence properties and potential failure modes.
	
	To formalize this analysis, we developed a suite of quantitative metrics, MBR, ASAS, and TASAS, that provide scalar measures of a policy's safety margin and robustness. These metrics proved effective across a range of environments, successfully distinguishing between safe, robust policies and those with critical, hidden flaws. The framework's ability to identify spurious attractors and quantify their persistence proved essential for uncovering critical flaws in policies that might otherwise appear successful based on reward-based evaluation alone, highlighting its diagnostic power.
	
	The framework's applicability to both discrete and continuous control tasks, as well as its foundation in established mathematical theory, offers a promising path toward building more trustworthy and reliable autonomous systems. Future work could extend this analysis to stochastic systems, explore its use in guiding policy training directly, and develop more computationally efficient methods for high-dimensional state spaces. Ultimately, by bridging the gap between the empirical success of RL and the rigorous demands of formal verification, this dynamical systems perspective provides a crucial tool for deploying intelligent agents safely and confidently in the real world.

    
\bibliographystyle{IEEEtran}
\bibliography{bibfile}
	
\end{document}